\documentclass[11pt]{article}
\usepackage[margin=1in]{geometry}
\usepackage[margin=0.2in]{caption}
\usepackage[font={scriptsize}]{caption}
\usepackage{xspace}
\usepackage{amsmath}
\usepackage{amsfonts}
\usepackage{graphicx}
\usepackage{mathtools}
\usepackage[numbers, sort]{natbib}
\usepackage[algo2e]{algorithm2e}
\usepackage{algorithm}
\usepackage{algpseudocode}
\usepackage{amsthm}
\usepackage[dvipsnames]{xcolor}
\usepackage{tikz}
\usepackage{color}
\usetikzlibrary{arrows,decorations.markings}
\usepackage{url}
\usepackage[hidelinks]{hyperref}
\hypersetup{breaklinks=true}
\urlstyle{same}
\def\bug{0}

\usepackage{etoolbox}
\newtoggle{condense}
\toggletrue{condense}

\title{\textbf{Fairness in Reinforcement Learning}}
\author{Shahin Jabbari, Matthew Joseph, Michael Kearns, Jamie Morgenstern, Aaron Roth\\
Department of Computer and Information Science, University of Pennsylvania\\
\{jabbari, majos, mkearns, jamiemor, aaroth\}@cis.upenn.edu
}
\date{}

\newtheorem{theorem}{Theorem}
\newtheorem*{theorem*}{Theorem}
\newtheorem{lemma}[theorem]{Lemma}

\newtheorem{corollary}[theorem]{Corollary}
\newtheorem{definition}{Definition}
\newtheorem{obs}{Observation}

\newtheorem{assumpt}{Assumption}

\DeclareMathOperator*{\argmax}{\arg\!\max}
\renewcommand{\Pr}{\mathbb{P}}
\newcommand{\ecube}{\textbf{E}$^3 $\xspace}
\newcommand{\ecubep}{\textbf{Fair-E}$^3$\xspace}
\newcommand{\theu}{an}
\newcommand{\rmax}{1}

\newcommand{\poly}{\textbf{poly}}
\newcommand{\E}{\mathbb{E}}
\newcommand{\alg}{\ensuremath{\mathcal{L}}\xspace}

\newcommand{\hist}{\ensuremath{h}}
\newcommand{\known}{\Gamma}
\newcommand{\A}{\mathcal{A}}
\newcommand{\He}{{H_\epsilon^\gamma}}
\newcommand{\Te}{T^*_\epsilon}

\newcommand{\T}{\mathcal{T}}
\newcommand{\afair}{\text{approximate-action}\xspace}
\newcommand{\pfair}{\text{approximate-choice}\xspace}

\newcommand{\mknown}{\ensuremath{M_\known}\xspace}
\newcommand{\munknown}{\ensuremath{M_{[n] \setminus \known}}\xspace}
\newcommand{\hmknown}{\ensuremath{\hat{M}_\known}\xspace}
\newcommand{\hmunknown}{\ensuremath{\hat{M}_{[n] \setminus \known}}\xspace}

\newcommand{\fgamma}{\frac{1}{1-\sqrt[3]{\gamma}}}

\newcommand{\jm}[1]{{\color{red}[Jamie: {#1}]}}

\newcommand{\citen}[1]{~[\citenum{#1}]}

\begin{document}

\maketitle

\begin{abstract} 
We initiate the study of {\em fairness\/} in reinforcement learning,
where the actions of a learning algorithm may affect its environment
and future rewards. 
Our fairness constraint requires that
an algorithm never prefers one action over another if the long-term
(discounted) reward of choosing the latter action is higher.
Our first result is negative: despite the fact that
fairness is consistent with the optimal policy, any
learning algorithm satisfying fairness must take time
exponential in the number of states
to achieve non-trivial approximation to the optimal policy.  
We then provide a provably fair polynomial time algorithm 
under an approximate notion of fairness, thus
establishing an exponential gap between exact and approximate
fairness.


\end{abstract} 

\newcommand{\R}{\mathbb{R}\xspace}
\renewcommand{\S}{\mathcal{S}\xspace}

\section{Introduction}
\label{sec:intro}

The growing use of machine learning for automated decision-making has
raised concerns about the potential for unfairness in learning
algorithms and models.  In settings as diverse as
policing~\citen{policing}, hiring~\citen{hiring},
lending~\citen{lending}, and criminal sentencing~\citen{sentencing},
mounting empirical evidence suggests these concerns are not merely
hypothetical~\citen{propublica,Sweeney13}.

We initiate the study of fairness in reinforcement learning, where an
algorithm's choices may influence the state of the world and future
rewards.  In contrast, previous work on fair machine learning has
focused on myopic settings where such influence is absent, e.g. in
i.i.d. or no-regret models\citen{DworkHPRZ12,FFMSV15,HardtPS16,
  JosephKMR16}.  The resulting fairness definitions therefore do not
generalize well to a reinforcement learning setting, as they do not
reason about the effects of short-term actions on long-term
rewards. This is relevant for the settings where historical
context can have a distinct influence on the future.  For
concreteness, we consider the specific example of hiring (though other
settings such as college admission or lending decisions can be
embedded into this framework).  Consider a firm aiming to hire
employees for a number of positions. The firm might consider a
variety of hiring practices, ranging from targeting and hiring
applicants from well-understood parts of the applicant pool (which
might be a reasonable policy for short-term productivity of its
workforce), to exploring a broader class of applicants whose
backgrounds might differ from the current set of employees at the
company (which might incur short-term productivity and learning costs
but eventually lead to a richer and stronger overall applicant pool).

We focus on the standard model of reinforcement learning, in which an
algorithm seeks to maximize its discounted sum of rewards in a
Markovian decision process (MDP). Throughout, the reader should
interpret the \emph{actions} available to a learning algorithm as
corresponding to choices or policies affecting individuals (e.g. which
applicants to target and hire). The \emph{reward} for each action
should be viewed as the short-term payoff of making the corresponding
decision (e.g. the short-term influence on the firm's productivity after
hiring any particular candidate). The actions taken by the algorithm
affect the underlying state of the system (e.g. the company's demographics
as well as the available applicant pool) and therefore in turn will
affect the actions and rewards available to the algorithm in the
future.

Informally, our definition of fairness requires that (with high
probability) in state $s$, an algorithm never chooses an available
action $a$ with probability higher than another action $a'$ unless
$Q^*(s,a) > Q^*(s,a')$, i.e. the long-term reward of $a$ is greater
than that of $a'$.  This definition, adapted from~\citet{JosephKMR16},
is \emph{weakly meritocratic}: facing some set of actions, an
algorithm must pick a distribution over actions with (weakly) heavier
weight on the better actions (in terms of their discounted long-term
reward). Correspondingly, a hiring process satisfying our fairness
definition cannot probabilistically target one population over another
if hiring from either population will have similar long-term benefit
to the firm's productivity.

Unfortunately, our first result shows an exponential separation in
expected performance between the best unfair algorithm and any
algorithm satisfying fairness. This motivates our study of a natural
relaxation of (exact) fairness, for which we provide a polynomial time
learning algorithm, thus establishing an exponential separation
between exact and approximately fair learning in MDPs.


\noindent {\bf Our Results}
Throughout, we use \emph{(exact)} fairness to refer to the
adaptation of~\citet{JosephKMR16}'s definition defining an action's
quality as its potential long-term discounted reward. We also
consider two natural relaxations.  The first, \emph{\pfair fairness},
requires that an algorithm never chooses a worse action with
\emph{probability} substantially higher than better actions.  The
second, \emph{\afair fairness}, requires that an algorithm never
favors an action of substantially lower \emph{quality} than that of a
better action.

The contributions of this paper can be divided into two
parts. 
First, in Section~\ref{sec:lower-bound}, we give a lower bound on the
time required for a learning algorithm to achieve near-optimality subject
to (exact) fairness or \pfair fairness.
\begin{theorem*}[Informal statement of
  Theorems~\ref{thm:lower},~\ref{thm:lower2}, and~\ref{thm:lower1}]
  For constant $\epsilon$, to achieve $\epsilon$-optimality, \emph{(i)} any fair or \pfair fair algorithm
  takes a number of rounds exponential in the number of MDP
  states and \emph{(ii)} any \afair fair algorithm
  takes a number of rounds exponential  in $1/(1-\gamma)$, for discount factor $\gamma$.
\end{theorem*}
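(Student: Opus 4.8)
The plan is to exhibit, for each notion of fairness, a family of hard MDP instances on which the fairness constraint forces any algorithm to explore almost uniformly, so that the probability of reaching the states that reveal the optimal policy is exponentially small. Concretely, I would build a ``combination-lock'' MDP: a chain of states $s_0, s_1, \dots, s_d$ in which every state offers $k$ actions, exactly one of which (the ``correct'' action $c_i$ at $s_i$) advances to $s_{i+1}$, while every other action resets the algorithm to $s_0$ (or sends it to an absorbing low-reward state). Only the terminal state $s_d$ carries high reward. I would then define a family $\{M_c\}$ indexed by the hidden combination $c = (c_1,\dots,c_d) \in [k]^d$, and argue the lower bound against a uniformly random instance from this family, invoking Yao's principle to conclude a bound against every randomized algorithm on the worst-case instance.

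The heart of the argument is an indistinguishability/coupling claim: until an algorithm has actually traversed the lock up to state $s_i$, its entire observation history has a distribution independent of the coordinates $c_i, c_{i+1}, \dots$, because every instance looks identical on the states visited so far. Hence for a fixed history the algorithm acts identically across the family, it cannot be confident which action at $s_i$ has the larger $Q^*$ value, and since in $M_c$ the advancing action genuinely has strictly larger long-term reward, favoring any fixed action violates fairness on the $(k-1)/k$ fraction of instances where that action is wrong. For (exact) fairness this forces the play distribution at each undetermined state to be exactly uniform; for \pfair fairness the permitted probability slack only lets the algorithm tilt by a small additive amount in an uninformed direction, which the adversary defeats by its choice of $c$. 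In either case the play assigns probability at most $1/k+\beta$ to the correct action for a small slack $\beta$, so one attempt advances through all $d$ levels with probability $(1/k+\beta)^d = k^{-\Omega(d)}$, and the expected number of rounds before the lock is opened---and hence before $\epsilon$-optimality is possible---is $k^{\Omega(d)}$, exponential in the number of states. This handles Theorems~\ref{thm:lower} and~\ref{thm:lower2}.

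For \afair fairness (Theorem~\ref{thm:lower1}) the same skeleton works, but discounting caps the effective depth of the lock. With a quality slack $\alpha$, \afair fairness only forbids favoring an action whose $Q^*$ is more than $\alpha$ below the best, so the constraint binds at $s_i$ only while the advancing action's advantage exceeds $\alpha$. Since the only reward sits at $s_d$ and is discounted by roughly $\gamma^{\,d-i}$ as seen from $s_i$, I would compute that $Q^*(s_i, c_i) - \max_{a\ne c_i} Q^*(s_i,a) \approx \gamma^{\,d-i}\,\vmax$, which exceeds $\alpha$ only for the last $\Theta\!\big(\tfrac{1}{1-\gamma}\big)$ states before the reward. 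On those states fairness still forces near-uniform play and the previous argument applies, while beyond them the algorithm may explore freely. The lock is therefore effectively of depth $\Theta(1/(1-\gamma))$, giving an expected opening time of $k^{\Omega(1/(1-\gamma))}$, exponential in $1/(1-\gamma)$ rather than in the (possibly much larger) number of states.

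The step I expect to be the main obstacle is making the indistinguishability argument fully rigorous together with the matching $\epsilon$-optimality gap. I must choose the reward at $s_d$ and the reset dynamics so that (a) the advancing action's $Q^*$-advantage is large enough to make fairness bind in the intended region, yet (b) the value gap between the optimal policy and any policy that has not opened the lock is a constant $\epsilon$ independent of $d$ and $\gamma$, and (c) the observation distributions across the family coincide exactly on unexplored states, so that no information leaks to let the algorithm safely break symmetry. Balancing (a) against (b) through the discounting---especially in the \afair case, where the $\Theta(1/(1-\gamma))$ accounting of where the constraint stops binding must be tight---is the delicate part; the exploration lower bound itself then follows from a routine averaging over the random combination together with a concentration bound on the number of successful traversals achievable in a sub-exponential number of rounds.
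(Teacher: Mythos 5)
Your high-level skeleton (fairness forces near-uniform play, a random-walk hitting-time bound then gives exponentially many rounds, and discounting caps the effective depth at roughly $\log(1/\alpha)/(1-\gamma)$ in the \afair case) matches the paper's, but the family of instances you build it on is broken: you hide the information in the \emph{transitions} (the combination $c$), whereas a learner in this model observes (state, action, reward) triples and therefore learns $c_i$ at each visited state after at most $k$ local experiments, without ever approaching $s_d$. In every instance $M_c$ of your family the advancing action is also the action of strictly highest $Q^*$ (the only high reward sits at $s_d$), so the algorithm that plays uniformly at a state only until its observed transitions reveal the advancing action, and deterministically follows that action thereafter, is \emph{exactly} fair on every $M_c$ -- at every history it weakly favors the truly better action -- and it opens the lock in $O(kd^2)$ steps. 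Your indistinguishability claim ("for a fixed history the algorithm acts identically across the family") collapses exactly when it is needed: experimenting at $s_i$ eliminates all instances inconsistent with the observed transition, and fairness then \emph{permits} breaking the symmetry. The \afair part fails for a second reason as well: where the advancing action's advantage exceeds $\alpha$, $\alpha$-action fairness only forbids favoring the \emph{worse} action, so it does not "force near-uniform play" as you assert -- it allows the fast policy that always advances. To force uniformity you would need two instances, indistinguishable at the relevant histories, whose $Q^*$ gaps at the same state exceed $\alpha$ in \emph{opposite} directions, and your family cannot supply this once transitions have been observed.

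The paper's construction (Definition~\ref{def:mdp-lb}) avoids this by making transitions deterministic and identical across instances and hiding the uncertainty solely in the reward of the far state $s_n$: the pair is $M(1)$ versus $M(0.5)$. The instance doing all the work is $M(0.5)$, in which every action at every state has exactly the same $Q^*_M$ value, so the weakly meritocratic requirement becomes a two-sided constraint and forces \emph{exactly} uniform play forever (and nearly uniform play under \pfair fairness, giving Theorem~\ref{thm:lower2}); a coupling of the two runs then transfers this forced behavior to $M(1)$ until the first visit to $s_n$, which a uniform walk on the chain needs $\Omega(k^n)$ steps to make, and a separate value computation (Lemma~\ref{lem:v-star} together with monotonicity of $V^*_M(s_i)$ in $i$) converts failure to reach the last few states into a constant optimality gap -- the accounting you correctly flag as delicate in your point (b). No appeal to Yao's principle is needed; two coupled instances suffice. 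For Theorem~\ref{thm:lower1} the paper reuses the same reward-based construction with $n = \lceil \log(1/(2\alpha))/(1-\gamma)\rceil$ chosen so the relevant gaps exceed $\alpha$, which is where your depth-capping intuition reappears. In short, the hidden quantity must be one the learner provably cannot observe in sub-exponential time -- a reward at the end of the chain -- not the transition structure, which is free to observe; as written, your family admits a polynomial-time exactly fair algorithm, so the proposal does not establish any of the three theorems.
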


Second, we present an \afair fair algorithm (\ecubep) in
Section~\ref{sec:upper-bound} and prove a polynomial upper bound on
the time it requires to achieve near-optimality.
\begin{theorem*}[Informal statement of Theorem~\ref{thm:fair-rl}]
  For constant $\epsilon$ and any MDP satisfying standard assumptions, \ecubep is an \afair
  fair algorithm achieving $\epsilon$-optimality in a number
  of rounds that is \ifnum\bug=1 \else (necessarily) \fi exponential
  in $1/(1-\gamma)$ and polynomial in other parameters.
\end{theorem*}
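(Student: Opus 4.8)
The plan is to adapt the \ecube algorithm to the discounted, fairness-constrained setting and to establish the theorem's three claims—\afair fairness, $\epsilon$-optimality, and the running-time bound—largely separately, with the fairness constraint forcing the only substantive departure from the classical analysis. The central bookkeeping object is the set $\known$ of \emph{known} states, meaning states visited often enough that every action's immediate reward and transition distribution is estimated to prescribed accuracy. As in \ecube, I would maintain the \emph{known-state MDP} \mknown, which collapses all unknown states into a single absorbing state, and alternate between \emph{exploitation} (running a near-optimal policy of \mknown) and \emph{exploration} (trying to reach an unknown state). The single structural change required for fairness is that \ecubep never acts on point estimates of $Q^*$; it acts on confidence-bounded estimates and refuses to favor any action whose $Q^*$ it has not yet localized to within the \afair tolerance $\alpha$.

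First I would prove fairness. Conditioned on the high-probability \emph{good event} that all known-state estimates are accurate, the simulation lemma turns per-state accuracy into an estimate $\hat{Q}$ of $Q^*(s,a)$ whose error decays geometrically over the planning horizon $\He$, valid whenever the states reachable within that horizon are known. I would set $\He$ large enough, and collect enough samples per known state, that $\hat{Q}$ is accurate to within $\alpha/2$; the bookkeeping that balances planning truncation, statistical estimation, and fairness slack is the source of the cube-root factor $\fgamma$ appearing in the final bound. I then define \ecubep to favor $a$ over $a'$ only when both estimates are valid and $\hat{Q}(s,a)\ge\hat{Q}(s,a')$, and to randomize uniformly otherwise—in particular among actions whose $Q^*$ is not yet localized. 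Uniform play never violates fairness, since equal probabilities are always permitted; and when \ecubep does favor $a$ over $a'$, validity of the two estimates gives $Q^*(s,a)\ge\hat{Q}(s,a)-\alpha/2\ge\hat{Q}(s,a')-\alpha/2\ge Q^*(s,a')-\alpha$, which is exactly \afair fairness with slack $\alpha$. That only estimates of $Q^*$, rather than $Q^*$ itself, are available is precisely what forces the relaxation from exact to \afair fairness.

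Next I would establish $\epsilon$-optimality through the \emph{Explore-or-Exploit} dichotomy, adapted to discounting. Using the simulation lemma to certify that values computed in \mknown track the true MDP up to the horizon error, at any round either (i) the optimal exploitation policy of \mknown already earns within $\epsilon$ of $Q^*$ from the current state, in which case exploiting is both near-optimal and fair, or (ii) every near-optimal policy escapes to an unknown state with non-negligible probability, so exploration makes measurable progress. Since only $N$ states can ever be discovered and each successful exploration promotes at least one new state to $\known$ after polynomially many visits, case (ii) can recur only polynomially often, after which case (i) delivers $\epsilon$-optimality on the good event.

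The running-time accounting is the step I expect to be the main obstacle, and it is where the exponential dependence on $1/(1-\gamma)$ is born. Unconstrained \ecube explores by steering directly toward unknown states, but \ecubep may not: an action leading into unexplored territory has, by definition, an undetermined $Q^*$, and because its true value could be large, favoring a known action over it risks violating fairness—so \ecubep must randomize. Exploration therefore degenerates into a (near-)uniform random walk that reaches a state at depth $d$ only with probability of order $A^{-d}$ in the number of actions $A$, so reaching the depth $\He$ at which $Q^*$ can be pinned down to accuracy $\epsilon$ costs time exponential in $1/(1-\gamma)$. The technical burden is to verify that this is the \emph{sole} super-polynomial cost: I would show that the number of promotions to $\known$, the samples per state needed for the concentration bounds, and the number of exploration/exploitation alternations all stay polynomial in $N$, $A$, $1/\epsilon$, and $\log(1/\delta)$, and that the slow uniform walk still meets the escape probability demanded by the Explore-or-Exploit step. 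Assembling the fairness, optimality, and timing guarantees yields the theorem, and the lower bound of Theorem~\ref{thm:lower1} certifies that the exponential dependence on $1/(1-\gamma)$ is unavoidable for any \afair fair algorithm.
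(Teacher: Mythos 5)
Your overall architecture (known states, the collapsed exploitation MDP \mknown, an explore-or-exploit dichotomy) matches the paper's, and your chain $Q^*(s,a)\geq\hat{Q}(s,a)-\alpha/2\geq\hat{Q}(s,a')-\alpha/2\geq Q^*(s,a')-\alpha$ correctly shows that favoring actions with weakly higher $\alpha/2$-accurate estimates is $\alpha$-action fair. But two of your design choices create genuine gaps. First, your fairness rule --- favor an action only when its estimated $Q^*_M$ is weakly higher, else play uniformly --- removes exactly the freedom your own analysis needs: any deterministic choice it permits must be greedy with respect to $\hat{Q}\approx Q^*_M$, so following ``a near-optimal policy of \mknown'' (your exploitation step) or a deliberate escape policy (your exploration step) is generally forbidden, since the \mknown-optimal action avoids escaping precisely in the cases where the $Q^*_M$-greedy action escapes, and vice versa. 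The paper resolves this with the $\alpha$-restricted MDPs $\hmknown^\alpha$ and $\hmunknown^\alpha$: \emph{every} policy in a restricted MDP is $\alpha$-action fair (Observation~\ref{obs:m-alphaa}) and restriction preserves optimality (Observation~\ref{obs:m-alpha-opt}), so the algorithm may deliberately follow either the restricted exploitation optimum or the restricted exploration optimum, and Lemma~\ref{lem:ex2} guarantees the latter escapes with probability at least $\beta/T$ --- directed, fair exploration that keeps the dependence on $n$ polynomial. Relatedly, your mechanism for obtaining $\hat{Q}$ (simulation over a known radius-\He neighborhood) creates a chicken-and-egg problem you propose to break with uniform walks crossing an \He-wide band of invalid states; you place the exponential cost there and acknowledge the accounting is unverified. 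That is not the paper's mechanism: the paper makes a state known by taking $m_1 = O(k^{\He+3}\cdot n (1/((1-\gamma)\alpha))^2\log(k/\delta))$ uniformly random trajectories \emph{from that state} and invoking the sparse-sampling result of \citet{KearnsMN99} (Theorem~\ref{thm:kmn99}) to estimate all $Q^*_M(s,a)$ directly, with no requirement that neighboring states be known; the $k^{\Theta(1/(1-\gamma))}$ factor is per-state sample complexity, not walk-reachability.

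Second, you are proving the wrong optimality statement. Definition~\ref{def:performance} compares the \emph{undiscounted average of $V^*_M$ over visited states} against $\E_{s\sim\mu^*}\left[V^*_M(s)\right]$, the average under the optimal policy's stationary distribution. Your argument establishes per-state discounted value near-optimality (``within $\epsilon$ of $Q^*$ from the current state''), which does not imply the paper's criterion: a policy can be $\epsilon$-optimal in discounted value from every state yet spend all time after horizon \He in states whose $V^*_M$ is poor. Closing this gap is what forces the paper's mixing-time machinery --- exploitation phases of length $\Te$, Lemma~\ref{lem:mixing2}, Corollary~\ref{cor:exploit-policy}, and Assumption~\ref{assumpt:known} --- and is why $\Te$ appears in the final bound of Equation~\ref{eq:sample-complexity}. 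Your claimed dependence on only $n$, $k$, $1/\epsilon$, and $\log(1/\delta)$, with no mixing-time parameter at all, cannot suffice for this criterion. (A minor symptom of the same confusion: the ``cube-root factor'' $\fgamma$ you invoke belongs to the lower-bound constructions, not to the upper bound.)
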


\ifnum\bug=1 Note that Theorem~\ref{thm:lower1} shows that it is
impossible for any \afair fair algorithm to have a running time that
has a polynomial dependence on $1/(1-\gamma)$.  \else The exponential
dependence of \ecubep on $1/(1-\gamma)$ is tight: it matches our lower
bound on the time complexity of any \afair fair algorithm.  \fi 
Furthermore, our results establish rigorous trade-offs
between fairness and performance facing reinforcement learning
algorithms.

\subsection{Related Work}
\label{sec:related}
The most relevant parts of the large body of literature on reinforcement
learning focus on constructing learning algorithms with provable
performance guarantees. \ecube~\citen{KearnsS02} was the first learning
algorithm with a polynomial learning rate, and subsequent work improved
this rate (see~\citet{SzitaS10} and references within). The study of
\emph{robust} MDPs~\citen{MD05, MMX12, LXM13} examines MDPs with high parameter
uncertainty but generally uses ``optimistic" learning strategies that ignore
(and often conflict with) fairness and so do not directly apply to this work.

Our work also belongs to a growing literature studying the problem of
fairness in machine learning. Early work in data
mining~\citen{HajianD13, KamiranKZ12, Zem13, KamishimaAAS12,
  pedreshi08, LRT11} considered the question from a primarily
empirical standpoint, often using \emph{statistical parity} as a
fairness goal. ~\citet{DworkHPRZ12} explicated several drawbacks of 
statistical parity and instead proposed one of the first
broad definitions of algorithmic fairness, formalizing the idea that
``similar individuals should be treated similarly".
Recent papers have proven several
impossibility results for satisfying different fairness requirements
simultaneously~\citen{KleinbergMR16,FSV16}. 
More recently,~\citet{HardtPS16} proposed new notions of
fairness and showed how to achieve
these notions via post-processing of a black-box
classifier. 
\citet{WoodworthGOS17}~and~\citet{ZVGG17} further studied these notion theoretically and empirically.

\subsection{Strengths and Limitations of Our Models}

In recognition of the duration and consequence of choices made by a learning
algorithm during its learning process -- e.g. job applicants not hired -- our work departs from previous work
and aims to guarantee the fairness of \emph{the learning process itself}.
To this end, we adapt the fairness definition of~\citet{JosephKMR16},
who studied fairness in the bandit framework and defined fairness with respect
to one-step rewards. To capture the desired interaction and evolution of the
reinforcement learning setting, we modify this myopic definition and define
fairness with respect to long-term rewards: a fair learning algorithm
may only choose action $a$ over action $a'$ if $a$ has true long-term reward
at least as high as $a'$.
Our contributions thus depart from previous work in reinforcement
learning by incorporating a fairness requirement (ruling out existing
algorithms which commonly make heavy use of
``optimistic" strategies that violates fairness) and depart
from previous work in fair learning by requiring ``online" fairness in
a previously unconsidered reinforcement learning context.

First note that our definition is \emph{weakly meritocratic}: an algorithm
satisfying our fairness definition can \emph{never} probabilistically favor a worse
option but is not \emph{required} to favor a better option. This confers both
strengths and limitations. Our fairness notion still permits a type of ``conditional 
discrimination" in which a fair algorithm favors group A over group B by 
selecting choices from A when they are superior and
randomizing between A and B when choices from B are superior. In this sense, our
fairness requirement is relatively minimal, encoding a necessary variant of 
fairness rather than a sufficient one. This makes our
lower bounds and impossibility results (Section~\ref{sec:lower-bound})
relatively stronger and upper bounds (Section~\ref{sec:upper-bound}) relatively
weaker.

Next, our fairness requirement holds (with high probability) 
across \emph{all} decisions that a fair algorithm makes. We view this strong 
constraint as worthy of serious consideration, since ``forgiving" unfairness during the learning
may badly mistreat the training population, especially if the learning
process is lengthy or even continual. Additionally, it is unclear how to relax
this requirement, even for a small fraction of the algorithm's decisions,
without enabling discrimination against a correspondingly small population.

Instead, aiming to preserve the ``minimal" spirit of our definition,
we consider a relaxation that only prevents an algorithm from
favoring a \emph{significantly} worse option over a better option
(Section~\ref{sec:fairness}). Hence,
approximate-action fairness should be viewed as a weaker constraint:
rather than safeguarding against every violation of ``fairness'', it
instead restricts how egregious these violations can be. We discuss further
relaxations of our definition in Section~\ref{sec:future}.

\section{Preliminaries}
\label{sec:prelim}

In this paper we study reinforcement learning in Markov Decision
Processes (MDPs). An MDP is a tuple
$M = (\mathcal{S}_M, \mathcal{A}_M, P_M, R_M, T, \gamma)$ where
$\mathcal{S}_M$ is a set of $n$ \emph{states}, $\A_M$ is a set of $k$
\emph{actions}, $T$ is a \emph{horizon} of a (possibly infinite)
number of rounds of activity in $M$, and $\gamma$ is a \emph{discount
  factor}.
$P_M: \mathcal{S}_M \times \mathcal{A}_M \to \mathcal{S}_M$ and
$R_M:\mathcal{S}_M \to [0,1]$ denote the \emph{transition probability
  distribution} and \emph{reward distribution}, respectively. We use
$\bar{R}_M$ to denote the mean of $R_M$.\footnote{Note that
  $\bar{R}_M\leq 1$ and $\text{Var}(R_M) \leq 1$ for all states. The
  bounded reward assumption can be relaxed
  (see~e.g.\citen{KearnsS02}).  Also assuming rewards in $[0, 1]$ can
  be made w.l.o.g. up to scaling. } A policy $\pi$ is a mapping from a
history $h$ (the sequence of triples (state, action, reward) observed
so far) to a distribution over actions. The discounted state and
state-action value functions are denoted by $V^\pi$ and $Q^\pi$, and
$V^\pi(s,T)$ represents expected discounted reward of following $\pi$
from $s$ for $T$ steps.  The highest values functions are achieved by
the \emph{optimal} policy $\pi^*$ and are denoted by $V^*$ and
$Q^*$~\citen{SuttonB98}.  We use $\mu^\pi$ to denote the stationary
distribution of $\pi$.  Throughout we make the following assumption.

\begin{assumpt}[Unichain Assumption]
\label{assumpt:unichain}
The stationary distribution of any policy in $M$ is independent of its
start state.
\end{assumpt}

We denote the \emph{$\epsilon$-mixing time} of $\pi$ by
$T^\pi_\epsilon$.  Lemma~\ref{lem:mixing2} relates the
$\epsilon$-mixing time of any policy $\pi$ to the number of rounds
until the $V^\pi_M$ values of the visited states by $\pi$ are close to
the expected $V^\pi_M$ values (under the stationary distribution
$\mu^\pi$). We defer all the omitted proofs to the Appendix.
\begin{lemma}
  \label{lem:mixing2} Fix $\epsilon > 0$.
  For any state $s$, following $\pi$ for $T\geq T^\pi_{\epsilon}$
  steps from $s$ satisfies
\[
\E_{s\sim\mu^\pi}\left[V^\pi_M(s)\right] - \E\left[\tfrac{1}{T}\sum_{t=1}^T V^\pi_M(s_t)\right] \leq \tfrac{\epsilon}{1-\gamma},
\]
where $s_t$ is the state visited at time $t$ when following $\pi$ from
$s$ and the expectation in the second term is over 
 the transition function
 and the randomization of $\pi$.\footnote{Lemma~\ref{lem:mixing2} can
   be stated for a weaker notion of mixing time called the
   \emph{$\epsilon$-reward mixing time} which is always linearly
   bounded by the $\epsilon$-mixing time but can be much smaller in
   certain cases (see~\citet{KearnsS02} for a discussion).}
\end{lemma}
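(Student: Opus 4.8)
The plan is to view the left-hand side as the gap between two expectations of the \emph{same} function taken under two different distributions, and then to control that gap by the total-variation distance between the distributions, scaled by a uniform bound on the function. Write $f := V^\pi_M$, let $P_t$ denote the distribution of the state $s_t$ visited at time $t$ when starting from $s$ and following $\pi$, and let $\bar P_T := \frac{1}{T}\sum_{t=1}^T P_t$ be the expected time-averaged occupancy measure. Since $\E\!\left[\frac{1}{T}\sum_{t=1}^T f(s_t)\right] = \frac{1}{T}\sum_{t=1}^T \E_{s'\sim P_t}[f(s')] = \E_{s'\sim\bar P_T}[f(s')]$, the quantity to bound is
\[
\E_{s'\sim\mu^\pi}[f(s')] - \E_{s'\sim\bar P_T}[f(s')] = \sum_{s'}\bigl(\mu^\pi(s') - \bar P_T(s')\bigr)\, f(s').
\]

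First I would establish the two ingredients. (i) The value function is uniformly bounded: since $R_M$ takes values in $[0,1]$, we have $0 \le V^\pi_M(s') \le \sum_{t\ge 0}\gamma^t = \frac{1}{1-\gamma}$ for every state $s'$, so $\|f\|_\infty \le \frac{1}{1-\gamma}$; this is precisely where the $\frac{1}{1-\gamma}$ factor in the bound originates. (ii) By H\"older's inequality (duality of $\ell_1$ and $\ell_\infty$), the displayed difference is at most $\|f\|_\infty\,\|\mu^\pi - \bar P_T\|_1$. Combining (i) and (ii) reduces the entire lemma to showing $\|\mu^\pi - \bar P_T\|_1 \le \epsilon$.

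The remaining step is where the $\epsilon$-mixing time enters, and it is the main obstacle. By definition of $T^\pi_\epsilon$, once $t \ge T^\pi_\epsilon$ the one-step distribution satisfies $\|P_t - \mu^\pi\|_1 \le \epsilon$, and this closeness persists for all larger $t$ because the transition operator is a contraction in $\ell_1$ around its stationary point $\mu^\pi = \mu^\pi P$ (applying the kernel to $P_t$ and to $\mu^\pi$ cannot increase their distance). Convexity of the norm then gives $\|\bar P_T - \mu^\pi\|_1 \le \frac{1}{T}\sum_{t=1}^T \|P_t - \mu^\pi\|_1$. The delicate point is that the early terms $t < T^\pi_\epsilon$ have not yet mixed and only admit the trivial bound $\|P_t-\mu^\pi\|_1 \le 2$; the cleanest resolution, which yields the stated constant $\frac{\epsilon}{1-\gamma}$ exactly, is to read $T^\pi_\epsilon$ in its time-averaged (Ces\`aro) form---equivalently, to invoke the $\epsilon$-reward mixing time of the footnote---so that $\frac{1}{T}\sum_{t=1}^T\|P_t-\mu^\pi\|_1 \le \epsilon$ holds directly for all $T \ge T^\pi_\epsilon$. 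Under the purely pointwise definition one instead obtains $\frac{\epsilon}{1-\gamma} + \frac{T^\pi_\epsilon}{T}\cdot\frac{1}{1-\gamma}$, whose second term is negligible once $T$ is a modest multiple of $T^\pi_\epsilon$.

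I expect no difficulty in the bounded-function and duality steps, which are routine. The entire substance lives in the $\ell_1$ convergence of the occupancy measure $\bar P_T$ to $\mu^\pi$, and the only genuine care needed is matching the time-averaging appearing in the statement to the precise form of the mixing-time guarantee being assumed.
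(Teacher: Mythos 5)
Your proposal is correct and follows essentially the same route as the paper: write the gap as $\sum_{s'}\left(\mu^\pi(s') - \bar P_T(s')\right)V^\pi_M(s')$, bound $V^\pi_M \leq \frac{1}{1-\gamma}$, and bound the $\ell_1$ distance by $\epsilon$ via the mixing-time definition. The definitional point you flag is resolved exactly as you suggest: the paper's proof treats $\hat\mu^\pi_T$ as the expected time-averaged occupancy measure over the $T$ steps (this is forced by its first equality), so the mixing-time guarantee applies directly and no extra $\frac{T^\pi_\epsilon}{T}$ term arises.
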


The \emph{horizon time} $\He \coloneqq \log\left(\epsilon(1-\gamma)\right)/\log(\gamma)$ of an MDP captures the number of steps
an approximately optimal policy must optimize over. 
The expected discounted reward of any policy after
$\He$
steps approaches the expected asymptotic discounted reward (\citet{KearnsS02}, Lemma 2).
A learning algorithm $\alg$ is a non-stationary policy that at each round
takes the entire history and outputs a distribution over actions.
We now define a performance measure for
learning algorithms.

\begin{definition}[$\epsilon$-optimality]
  \label{def:performance} Let $\epsilon > 0$ and $\delta\in(0, 1/2)$.
  \alg~achieves \emph{$\epsilon$-optimality} in $\T$ steps if for any $T\geq \T$ 
\begin{equation}
\label{eq:fair}
\E_{s\sim \mu^*}\left[V^*_M(s)\right]  - \E\left[\frac{1}{T}\sum_{t=1}^{T} V^*_M(s_t)\right]
\leq \frac{2\epsilon}{1-\gamma},
\end{equation}
with probability at least $1-\delta$, for $s_t$ the state
$\mathcal{L}$ reaches at time $t$, where the expectation is taken over
the transitions and the randomization of $\mathcal{L}$, for any MDP
$M$.
\end{definition}

We thus ask that a learning algorithm, after sufficiently many steps,
visits states whose values are arbitrarily close to the values of the
states visited by the optimal policy. Note that this is stronger than
the ``hand-raising'' notion in~\citet{KearnsS02},\footnote{We suspect
  unfair \ecube also satisfies this stronger notion.}  which only asked
that the learning algorithm stop in a state from which discounted
return is near-optimal, permitting termination in a state from which
the optimal discounted return is poor. In
Definition~\ref{def:performance}, if there are states with poor
optimal discounted reward that the optimal policy eventually leaves
for better states, so must our algorithms.  We also note the following
connection between the average $V^\pi_M$ values of states visited
under the stationary distribution of $\pi$ (and in particular
\theu~optimal policy) and the average undiscounted rewards achieved
under the stationary distribution of that policy.

\begin{lemma}[\citet{satinder-email}]
\label{lem:satinder}
Let $\bar{\textbf{R}}_M$ be the vector of mean rewards in states of
$M$ and $\textbf{V}^\pi_M$ the vector of discounted rewards in states
under $\pi$. Then
$\mu^\pi \cdot \bar{\textbf{R}}_M = (1-\gamma)\mu^\pi\cdot \textbf{V}^\pi_M.$
\end{lemma}

We design an algorithm which quickly achieves $\epsilon$-optimality and we bound the number of steps
$\T$ before this happens by a polynomial in the parameters of $M$. 

\newcommand{\qs}[2]{Q^*_M(#1,#2, 1)\xspace}

\subsection{Notions of Fairness}

\label{sec:fairness}
We now turn to formal notions of fairness.  Translated to our
setting, ~\citet{JosephKMR16} define action $a$'s quality as
the expected immediate reward for choosing $a$ from state $s$ and then
require that an algorithm not probabilistically favor $a$ over $a'$ 
if $a$ has lower expected immediate reward. 

However, this naive translation does not adequately capture the structural
differences between bandit and MDP settings since present rewards may
depend on past choices in MDPs. In particular, defining fairness
in terms of immediate rewards would prohibit any policy sacrificing
short-term rewards in favor of long-term rewards. This is undesirable,
since it is the long-term rewards that matter
in reinforcement learning, and optimizing for long-term rewards often
necessitates short-term sacrifices. 
Moreover, the
long-term impact of a decision should be considered when arguing about
its relative fairness. We will therefore define fairness using the
state-action value function $Q^*_M$.
\begin{definition}[Fairness]
\label{def:fair-ex}
$\alg$ is \emph{fair} if for all input $\delta> 0$, all $M$, all
rounds $t$, all states $s$ and all actions $a, a'$
\begin{equation*}
  Q^*_M(s,a) \geq Q^*_M(s,a') \Rightarrow \alg(s, a, \hist_{t-1}) \geq \alg(s, a', \hist_{t-1})
\end{equation*}
with probability at least $1-\delta$ over histories
$h_{t-1}$. \footnote{{\scriptsize $\alg(s, a, \hist)$ denotes the probability
  $\alg$ chooses $a$ from $s$ given history $h$.}}
\end{definition}

Fairness requires that an algorithm \emph{never} probabilistically favors an action with
lower long-term reward over an action with higher long-term reward. In hiring, this
means that an algorithm cannot target one applicant population over another unless
the targeted population has a higher quality.

In Section~\ref{sec:lower-bound}, we show that fairness can be
extremely restrictive. Intuitively,
$\alg$ must play uniformly at random until it has high
confidence about the $Q^*_M$ values, in some cases taking exponential
time to achieve near-optimality. This motivates relaxing
Definition~\ref{def:fair-ex}. We first relax the \emph{probabilistic} requirement and
require only that an algorithm not \emph{substantially} favor a worse action over a better
one.

\begin{definition}[Approximate-choice~Fairness]
\label{def:fair-approx2}
$\mathcal{L}$ is \emph{$\alpha$-choice fair} if for all inputs $\delta >0$ and
$\alpha > 0$: for all $M$, all rounds $t$, all states
$s$ and actions $a, a'$:
\begin{equation*}
	\hspace{-1mm}	Q_M^*(s,a) \geq Q_M^*(s,a') \Rightarrow \mathcal{L}(s, a, \hist_{t-1}) \geq \mathcal{L}(s, a', \hist_{t-1}) - \alpha,
\end{equation*}
with probability of at least $1-\delta$ over histories $\hist_{t-1}$.
If \alg is $\alpha$-choice fair for any input $\alpha>0$, we call \alg
\emph{\pfair~fair}.
\end{definition}

A slight modification of the lower bound for (exact) fairness shows that
algorithms satisfying \pfair fairness can also require exponential
time to achieve near-optimality. We therefore propose an alternative
relaxation, where we relax the \emph{quality} requirement. As described in
Section~\ref{sec:related}, the resulting notion of \afair
fairness is in some sense the most fitting relaxation of fairness, and is
a particularly attractive one because it allows us to give
algorithms circumventing the exponential hardness proved for fairness and
\pfair fairness.

\begin{definition}[Approximate-action Fairness]
\label{def:fair-approx}
$\mathcal{L}$ is \emph{$\alpha$-action fair} if for all inputs
$\delta>0$ and $\alpha >0$, for all $M$, all rounds $t$, all states
$s$ and actions $a, a'$:
\begin{equation*}
  Q^*_M(s,a) > Q^*_M(s,a') + \alpha \Rightarrow \mathcal{L}(s, a, \hist_{t-1}) \geq \mathcal{L}(s, a' \hist_{t-1})
\end{equation*}
with probability of at least $1-\delta$ over histories $h_{t-1}$.
If \alg is $\alpha$-action fair for any input $\alpha>0$, we call \alg
\emph{\afair~fair}.
\end{definition}

Approximate-choice fairness prevents equally good actions from  being chosen at very different rates,
while approximate-action fairness prevents substantially worse actions from being chosen over better ones.
In hiring, an approximately-action fair firm can only (probabilistically) target one population over another if
the targeted population is not substantially worse. While this is a weaker guarantee, it at least forces
an approximately-action fair algorithm to learn different populations to statistical confidence. This is
a step forward from current practices, in which companies have much higher degrees of uncertainty about
the quality (and impact) of hiring individuals from under-represented populations. For this reason and the computational benefits mentioned above, 
our upper bounds will primarily focus on \afair fairness.

We now state several useful observations regarding fairness.  We defer
all the formal statements and their proofs to the Appendix.
We note that there always exists a (possibly randomized) optimal
policy which is fair (Observation~\ref{obs:exact-opt}); moreover,
\emph{any} optimal policy (deterministic or randomized) is \afair fair
(Observation~\ref{obs:approx-fair}), as is the uniformly random policy
(Observation~\ref{obs:random-walk}).

Finally, we consider a restriction of the actions in an MDP $M$ to
nearly-optimal actions (as measured by $Q^*_M$ values).
\begin{definition}[Restricted MDP]
\label{def:fair-mdp}
The \emph{$\alpha$-restricted} MDP of $M$, denoted by $M^\alpha$, is
identical to $M$ except that in each state $s$, the set of available
actions are restricted to
$\{a: Q^*_M(s, a) \geq \max_{a'\in \mathcal{A}_M}Q^*_M(s, a') - \alpha
\mid a \in \mathcal{A}_M\}$.
\end{definition}

$M^\alpha$ has the following two properties: (i) any policy in
$M^\alpha$ is $\alpha$-action fair in $M$
(Observation~\ref{obs:m-alphaa}) and (ii) the optimal policy in
$M^\alpha$ is also optimal in $M$ (Observation~\ref{obs:m-alpha-opt}).
Observations~\ref{obs:m-alphaa}~and~\ref{obs:m-alpha-opt} aid our
design of an \afair fair algorithm: we construct $M^\alpha$ from
estimates of the $Q^*_M$ values (see Section~\ref{sec:fair-planning}
for more details).

\section{Lower Bounds}
\label{sec:lower-bound}
We now demonstrate a stark separation between the
performance of learning algorithms with and without
fairness.  First, we show that neither fair nor \pfair fair algorithms
achieve near-optimality unless
the number of time steps $\T$ is at least $\Omega(k^n)$, exponential
in the size of the state space.  We then show that any \afair fair
algorithm requires a number of time steps $\T$ that is at least
\ifnum\bug=1 $\Omega(k^{\tfrac{1}{\sqrt{1-\gamma}}})$ \else
$\Omega(k^{\tfrac{1}{1-\gamma}})$ \fi to achieve near-optimality.  We start
by proving a lower bound for fair algorithms.
\begin{theorem}
\label{thm:lower}
If $\delta < \tfrac{1}{4}$, $\gamma > \tfrac{1}{2} $ and
$\epsilon < \tfrac{1}{8}$, no fair algorithm can be $\epsilon$-optimal
in $\T = O(k^n)$ steps.\footnote{We have not optimized the constants
  upper-bounding parameters in the statement of
  Theorems~\ref{thm:lower},~\ref{thm:lower2} and~\ref{thm:lower1}. The values presented here are only chosen for
  convenience.}  \iftoggle{condense}{}{\jm{$O(k^n)}$}
\end{theorem}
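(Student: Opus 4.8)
The plan is to exhibit, for each $n$, an MDP on $n$ states where (exact) fairness traps any learning algorithm into exploring uniformly, so that it cannot locate the rewarding region in $O(k^n)$ steps. I would build a \emph{combination-lock} MDP $M$: states $s_1,\dots,s_n$, where from each $s_i$ ($i<n$) a single distinguished ``key'' action $a_i$ advances to $s_{i+1}$ while all other $k-1$ actions reset to $s_1$; the rewards are $0$ on $s_1,\dots,s_{n-1}$ and the only payoff sits at $s_n$ (which loops back into the chain so that Assumption~\ref{assumpt:unichain} holds). Because reward lies strictly deeper in the lock and is reached sooner by playing the key, one checks $Q^*_M(s_i,a_i)>Q^*_M(s_i,a')$ for every non-key $a'$, so fairness \emph{permits} the optimal behavior --- consistent with the fact that fairness never conflicts with $\pi^*$ --- but crucially the ordering of the $Q^*_M$ values at $s_i$ is determined by the as-yet-undiscovered payoff at $s_n$.

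The heart of the argument is that a fair $\alg$ cannot exploit this ordering until it has actually reached $s_n$. To formalize this I would introduce a companion MDP $M'$ that is identical to $M$ in transitions and in the rewards of $s_1,\dots,s_{n-1}$ but assigns reward $0$ to $s_n$ as well, so that $M'$ has reward $0$ everywhere. In $M'$ every $Q^*_{M'}(s,a)=0$, so for every pair $a,a'$ both $Q^*_{M'}(s,a)\ge Q^*_{M'}(s,a')$ and its reverse hold; fairness then forces $\alg(s,a,\hist_{t-1})=\alg(s,a',\hist_{t-1})$, i.e.\ \emph{exactly uniform play at every state}, with probability at least $1-\delta$ over histories. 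Since $\alg$ must be fair on \emph{every} MDP, this uniformity holds when $\alg$ is run on $M'$. Now $M$ and $M'$ share transitions and agree on all rewards except at $s_n$, so the distribution over histories up to the first visit to $s_n$ is identical under the two; coupling the two runs, $\alg$ plays uniformly at $s_1,\dots,s_{n-1}$ on $M$ as well, until it first reaches $s_n$, except on a history set of probability at most $\delta$.

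Under uniform play the induced chain advances from $s_i$ to $s_{i+1}$ with probability $1/k$ and otherwise resets, so the expected number of steps to first reach $s_n$ is exponential in $n$; tuning the number of lock levels makes this $\Omega(k^n)$. Hence for a suitable $T=\Theta(k^n)$ the probability that $\alg$ has reached $s_n$ by time $T$ stays bounded away from $1$ even after adding the fairness slack $\delta<\tfrac14$. Conditioned on never having reached $s_n$, every visited state lies in $\{s_1,\dots,s_{n-1}\}$, whose $V^*_M$ values are a $\gamma^{\Theta(n)}$-discounted fraction of the value attained under $\mu^*$; a short computation using $\gamma>\tfrac12$ and $\epsilon<\tfrac18$ then shows the left-hand side of the $\epsilon$-optimality inequality~\eqref{eq:fair} exceeds $2\epsilon/(1-\gamma)$ with probability more than $\delta$, so $\alg$ is not $\epsilon$-optimal in $O(k^n)$ steps. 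I expect the main obstacle to be the transfer step: arguing rigorously that the $(1-\delta)$-probable uniformity forced by fairness on $M'$ carries over to $M$ under the coupling and survives until the (rare) first hit of $s_n$, while simultaneously keeping the reward-value gap large enough to defeat~\eqref{eq:fair}. The remaining hitting-time and value-gap estimates are routine given the construction.
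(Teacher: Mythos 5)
Your construction and coupling argument are essentially the paper's own: your combination-lock MDP is exactly the lower-bound example of Definition~\ref{def:mdp-lb} (with its $k$-action generalization), with intermediate rewards set to $0$ instead of $\tfrac12$, and your companion MDP $M'$ plays the role of $M(0.5)$ against $M(1)$ in the paper. The transfer step you flag as the main obstacle is handled there just as you propose: fix the algorithm's randomness, couple the two runs, and observe that histories (hence actions) coincide until the first visit to $s_n$, so the $(1-\delta)$-probable uniformity that fairness forces on the all-equal-rewards MDP carries over to $M$ until that first visit.

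The genuine gap is in the value-gap step, which you call routine. Conditioning on ``never reached $s_n$'' does \emph{not} make the visited states' values a $\gamma^{\Theta(n)}$ fraction of $\E_{s\sim\mu^*}\left[V^*_M(s)\right]$: the set $\{s_1,\dots,s_{n-1}\}$ contains states arbitrarily close to the top of the lock, e.g.\ $V^*_M(s_{n-1})=\gamma/(1-\gamma)$ in your normalization, which is within an additive constant of the optimum $1/(1-\gamma)$ --- far smaller than the required gap $2\epsilon/(1-\gamma)$ when $\gamma$ is close to $1$. So if the walk could spend substantial time just below $s_n$, inequality~\eqref{eq:fair} would \emph{not} be violated, and your argument as written does not rule this out. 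The repair, which is precisely what the paper does, is to pick a cutoff depending on $\gamma$ rather than on $n$: set $f_\gamma = \lceil \fgamma \rceil$ and bound the probability that a $\T$-step uniform walk ever reaches \emph{any} state $s_i$ with $i\geq n-f_\gamma$ (still exponentially small, $p\leq 2^{-f_\gamma}$ for $\T = 2^{n-2f_\gamma}$, since it requires $n-f_\gamma$ consecutive correct actions), and then use that all states below the cutoff satisfy $V^*_M(s_i)\leq \gamma^{f_\gamma}/(1-\gamma)\leq e^{-3}/(1-\gamma)$ (the paper's Lemma~\ref{lem:v-star} serves this purpose for its $\tfrac12$-reward variant). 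Splitting the expectation over the good event (probability at least $1-p-\delta$) and the bad event then yields a gap of order $(1-p-\delta)(1-e^{-3})/(1-\gamma)$, which exceeds $2\epsilon/(1-\gamma)$ under the stated bounds $\delta<\tfrac14$, $\gamma>\tfrac12$, $\epsilon<\tfrac18$. With this thresholding step inserted, your proof goes through and coincides with the paper's.
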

Standard reinforcement learning algorithms (absent a fairness
constraint) learn an $\epsilon$-optimal policy in a number of steps
polynomial in $n$ and $\tfrac{1}{\epsilon}$; Theorem~\ref{thm:lower}
therefore shows a steep cost of imposing fairness.  We outline the
idea for proof of Theorem~\ref{thm:lower}.
For intuition, first consider the special case when the number of
actions $k=2$.  We introduce the MDPs witnessing the claim in
Theorem~\ref{thm:lower} for this case.

  \iftoggle{condense} {
\begin{definition}[Lower Bound Example]
\label{def:mdp-lb}
For $\mathcal{A}_M = \{L,R\}$, let
$M(x) = (\mathcal{S}_M, \mathcal{A}_M, \mathcal{P}_M, \mathcal{R}_M,
T, \gamma, x)$ be an MDP with
\begin{itemize}
	\setlength\itemsep{0.1em}
	\item for all $i \in [n]$, $P_M(s_i,L,s_1) = P_M(s_i,R,s_j) = 1$ where
$j = \min\{i+1, n\}$ and is 0 otherwise.
	\item for $i \in [n-1]$, $R_M(s_i) = 0.5$, and $R_M(s_n) = x$.
\end{itemize}
\end{definition}
}
{
\begin{definition}
\label{def:mdp-lb}
Let $M(x) = (\mathcal{S}_M, \mathcal{A}_M, P_M, R_M, T, \gamma, x)$ be
an MDP where
	\begin{itemize}
		\item $\mathcal{S}_M = \{s_1, \ldots, s_n\}$.
		\item $\mathcal{A}_M = \{L,R\}$.
		\item for all $i\in[n]$, $P_M(s_i,L,s_1) =
                  P_M(s_i,R,s_j) = 1$
                  where $j = \min\{i+1, n\}$ and is 0 otherwise
                  (transitions are deterministic).
		\item for $i \in [n-1]$, $R_M(s_i) = 0.5$, and
                  $R_M(s_n) = x$ (rewards are deterministic).
	\end{itemize}
\end{definition}
}

\begin{figure}[h]
\centering
\includegraphics[width=0.7\textwidth]{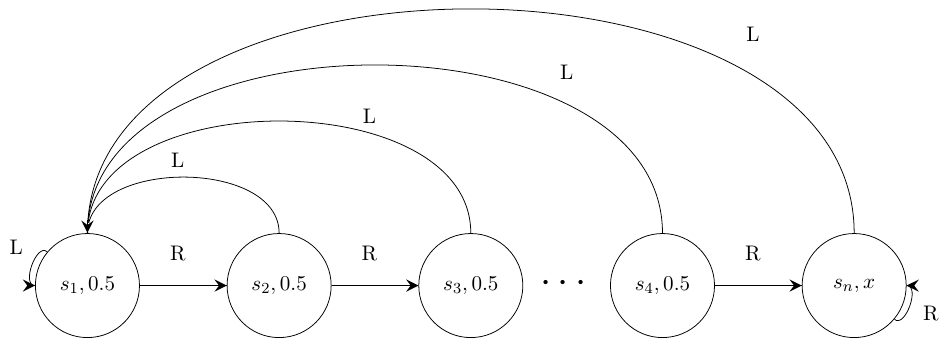}
  \caption{MDP$(x)$: Circles represent states (labels denote
    the state name and deterministic reward). Arrows
    represent actions.\label{fig:lb}}
\end{figure}

Figure~\ref{fig:lb} illustrates the MDP from
Definition~\ref{def:mdp-lb}.  All the transitions and rewards in $M$
are deterministic, but the reward at state $s_n$ can be either $1$ or
$\tfrac{1}{2}$, and so no algorithm (fair or otherwise) can determine
whether the $Q^*_M$ values of all the states are the same or not until
it reaches $s_n$ and observes its reward.  Until then, fairness
requires that the algorithm play all the actions uniformly at random
(if the reward at $s_n$ is $\tfrac{1}{2}$, any fair algorithm must
play uniformly at random forever).  Thus, \emph{any} fair algorithm
will take exponential time in the number of states to reach $s_n$.
This can be easily modified for $k> 2$: from each state $s_i$, $k-1$
of the actions from state $s_i$ (deterministically) return to state
$s_1$ and only one action (deterministically) reaches any other state
$s_{\min\{i+1, n\}}$.  It will take $k^n$ steps before any fair
algorithm reaches $s_n$ and can stop playing uniformly at random
(which is necessary for near-optimality).  The same example, with a
slightly modified analysis, also provides a lower bound of
$\Omega((k/(1+k\alpha))^n)$ time steps for approximate-choice fair
algorithms as stated in Theorem~\ref{thm:lower2}.
\begin{theorem}
\label{thm:lower2}
If
$\delta <\tfrac{1}{4}, \alpha < \tfrac{1}{4}, \gamma > \tfrac{1}{2}$
and $\epsilon < \tfrac{1}{8}$, no $\alpha$-choice fair algorithm is
$\epsilon$-optimal for $\T = O((\tfrac{k}{1+k\alpha})^n)$
steps.
 \end{theorem}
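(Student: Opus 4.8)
The plan is to reuse the hard family $M(x)$ from Definition~\ref{def:mdp-lb} in its $k$-action form (from each $s_i$ exactly one ``advancing'' action moves to $s_{\min\{i+1,n\}}$ and the other $k-1$ actions return to $s_1$) and to rerun the indistinguishability argument behind Theorem~\ref{thm:lower}, changing only the per-step bound on the probability of the advancing action. As there, $M(1/2)$ and $M(1)$ produce identical histories until the learner first reaches $s_n$ and observes its reward, so any $\alpha$-choice fair $\alg$ must act identically in the two MDPs up to that moment. The only genuinely new step is to recompute the quantitative consequence of $\alpha$-choice (rather than exact) fairness in $M(1/2)$, and then push the resulting weaker per-step bound through the same escape-time and value-gap calculations.

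First I would bound the advancing probability allowed by $\alpha$-choice fairness. In $M(1/2)$ every state pays $\tfrac12$, so $Q^*_M(s,a)=\tfrac{1}{2(1-\gamma)}$ for all $s,a$ and all actions are tied; Definition~\ref{def:fair-approx2} then forces, with probability at least $1-\delta$ over histories, $|\alg(s,a,\hist_{t-1})-\alg(s,a',\hist_{t-1})|\le\alpha$ for every pair. Letting $q$ be the probability of the unique advancing action, each of the other $k-1$ actions has probability at least $q-\alpha$, and since the $k$ probabilities sum to $1$ we get $q+(k-1)(q-\alpha)\le 1$, i.e.\ $q\le\tfrac{1+(k-1)\alpha}{k}\le\tfrac{1+k\alpha}{k}=:p$. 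By indistinguishability this same bound $p$ governs the advancing probability in $M(1)$ at every round before $s_n$ is reached; this is exactly the relaxation $1/k\to\tfrac{1+k\alpha}{k}$ of the exact-fairness analysis.

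Next I would bound the escape time as in Theorem~\ref{thm:lower}. First reaching $s_n$ at round $t$ requires $n-1$ consecutive advancing moves out of $s_1$, an event of probability at most $p^{n-1}$ by the Markov structure, so a union bound over the at most $T$ completion times gives $\Pr[\text{reach }s_n\text{ within }T]\le T\,p^{n-1}$. Choosing $T=\tfrac12\,p^{-(n-1)}=O\!\left((\tfrac{k}{1+k\alpha})^n\right)$ makes $T p^{n-1}\le\tfrac12$, so with probability at least $\tfrac12>\delta$ the learner never leaves $\{s_1,\dots,s_{n-1}\}$ during the first $T$ rounds. On that event the optimality gap is large: $\mu^*$ concentrates on $s_n$ with $V^*_M(s_n)=\tfrac{1}{1-\gamma}$, while each visited $s_i$ has $V^*_M(s_i)=\tfrac{1+\gamma^{n-i}}{2(1-\gamma)}$ (up to a constant shift in the exponent), bounded away from $V^*_M(s_n)$, so the left-hand side of \eqref{eq:fair} exceeds $\tfrac{2\epsilon}{1-\gamma}$ for $\epsilon<\tfrac18$ and $\gamma>\tfrac12$; hence $\alg$ fails $\epsilon$-optimality on $M(1)$.

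The routine but delicate points are the probability bookkeeping and the quantitative value gap. For the former, I must combine the per-round $(1-\delta)$ fairness guarantee with the constant escape probability so that the total failure probability still exceeds $\delta$; the hypothesis $\delta<\tfrac14$ gives ample slack. For the latter, the gap $V^*_M(s_n)-V^*_M(s_i)=\tfrac{1-\gamma^{n-i}}{2(1-\gamma)}$ only clears $\tfrac{2\epsilon}{1-\gamma}$ when the chain is long relative to the horizon (so that $\gamma^{n-i}$ is negligible at the small indices actually visited under near-uniform play), which holds in the exponential-in-$n$ regime the theorem targets. I expect this value-gap verification, inherited essentially verbatim from Theorem~\ref{thm:lower}, to be the main obstacle; the advancing-probability relaxation itself is a one-line change.
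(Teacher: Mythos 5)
Your overall strategy is the paper's: couple runs on $M(\tfrac12)$ and $M(1)$, use $\alpha$-choice fairness in $M(\tfrac12)$ plus indistinguishability to force near-uniform play in $M(1)$ until $s_n$ is reached, bound the escape time, and convert this into a violation of Equation~\ref{eq:fair}. Your per-step bound on the advancing probability, $q \leq \tfrac{1+(k-1)\alpha}{k} \leq \tfrac{1+k\alpha}{k}$, is correct, handles general $k$ directly, and is in fact slightly sharper than the paper's $\tfrac12+\alpha$ (the paper does $k=2$ and generalizes at the end).

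The genuine gap is where you place the barrier. Your formal event is ``the walk never reaches $s_n$ within $T$ rounds,'' with $T=\tfrac12 q^{-(n-1)}$, and you then assert that every state visited on this event has $V^*_M$ value bounded away from $V^*_M(s_n)$. That assertion is false: using your own formula, $V^*_M(s_n)-V^*_M(s_{n-1})=\tfrac{1-\gamma}{2(1-\gamma)}=\tfrac12$, an additive \emph{constant}, while $\epsilon$-optimality only requires the average to be within $\tfrac{2\epsilon}{1-\gamma}$, which dwarfs $\tfrac12$ as $\gamma\to 1$. So a trajectory that repeatedly climbs to states just below $s_n$ without entering $s_n$ is consistent with your event yet need not violate Equation~\ref{eq:fair}; worse, with your choice of $T$ the union bound gives no control over such excursions, since the probability of reaching $s_{n-f}$ is bounded only by $Tq^{n-f-1}=\tfrac12 q^{-f}\gg 1$ (indeed, reaching $s_{n-1}$ within your $T$ is \emph{likely}). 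This is precisely what the paper's proof engineers around: it sets $f_\gamma=\lceil\tfrac{1}{1-\sqrt[3]{\gamma}}\rceil$, places the barrier at $s_{n-f_\gamma}$, and shrinks the horizon to $\T=(\tfrac{2}{1+2\alpha})^{n-2f_\gamma}$, so that the crossing probability $((1+2\alpha)/2)^{f_\gamma}$ is a small constant while every state below the barrier satisfies $V^*_M(s_i)<\tfrac{1+2\gamma^{f_\gamma+1}}{2(1-\gamma)}$ with $\gamma^{f_\gamma}\leq e^{-3}$ (Lemma~\ref{lem:v-star}), making the gap $\Omega(\tfrac{1}{1-\gamma})$ and letting the constants $\delta<\tfrac14$, $\alpha<\tfrac14$, $\gamma>\tfrac12$, $\epsilon<\tfrac18$ close the argument. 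You flag exactly this point as ``the main obstacle'' but defer it as inherited verbatim from Theorem~\ref{thm:lower}; it is not inherited under your event and your $T$. To repair it you must either move the barrier and re-choose $T$ as the paper does, or add an occupation-time argument (e.g., $\Pr[s_t=s_i]\leq q^{i-1}$, so the expected fraction of time spent above $s_{n-f_\gamma}$ is exponentially small, then apply Markov); as written, the value-gap step does not follow.
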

Fairness and \pfair~fairness
 are both extremely costly, ruling out polynomial time learning rates.  Hence, we
 focus on \afair fairness. Before moving to positive results, we
 mention that the time complexity of \afair fair algorithms will still
 suffer from an exponential dependence on $\tfrac{1}{1-\gamma}$.
\begin{theorem}
\label{thm:lower1}
For $\delta <\tfrac{1}{4}$, $\alpha < \tfrac{1}{8}$, $\gamma > \max(0.9,c)$,
$c \in (\tfrac{1}{2},1)$ and $\epsilon <\tfrac{1 - e^{c-1}}{16}$,
no $\alpha$-action fair algorithm is $\epsilon$-optimal
for $\T = O((k^{\tfrac{1}{1-\gamma}})^c)$ steps.
\end{theorem}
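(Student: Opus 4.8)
The plan is to prove a matching lower bound by an information-theoretic argument against a family of ``hidden-jackpot'' MDPs laid out on a $k$-ary tree of depth $m=\Theta\!\left(\tfrac{1}{1-\gamma}\right)$, tuned so that $k^{m}=\Theta(k^{c/(1-\gamma)})$ while $\gamma^{m}$ stays a constant strictly below $1$. The conceptual point is that \afair fairness cannot force slow play on the chain of Definition~\ref{def:mdp-lb} the way exact or \pfair fairness can: once the advancing action is identified its $Q^*_M$ gap is never negative, so the algorithm is never \emph{forbidden} from advancing. The fix is to hide \emph{which} of the $k^{m}$ root-to-leaf paths carries the reward. Then at every node each outgoing action is the uniquely best action in some indistinguishable member of the family, so \afair fairness forces the algorithm to spread its mass (nearly) uniformly, turning exploration into a needle-in-a-haystack search over $k^{m}$ paths.

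Concretely, I would take a complete $k$-ary tree of depth $m$ in which every internal state has the $k$ children as its successors (one per action) with reward $\tfrac12$; every leaf except one designated jackpot leaf $f_\ell$ deterministically resets to the root with reward $\tfrac12$, while $f_\ell$ has reward $1$ and stays put, so that reaching it yields a \emph{sustained} near-maximal discounted return. To respect Assumption~\ref{assumpt:unichain} I would add a tiny teleport-to-root probability from every state, making every stationary policy unichain while perturbing values by $o(1)$. The family $\{M_\ell\}_\ell$ is indexed by the jackpot leaf; all members share identical transitions and rewards \emph{until $f_\ell$ is actually visited}, so any algorithm induces the same history distribution on all of them up to that first visit. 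A short computation then yields two facts: (i) along the jackpot path, the $Q^*_M$ gap between the correct action and any other action at a depth-$d$ node is $\tfrac{\gamma^{\,m-d}(1-\gamma^{m})}{2(1-\gamma)}=\Theta\!\left(\tfrac{1}{1-\gamma}\right)$, since a wrong turn forces a full extra traversal of the tree before the jackpot can be reached, far exceeding $\alpha$ under the hypotheses ($\alpha<\tfrac18$, $\gamma>0.9$); and (ii) $V^*_M$ is \emph{non-constant}, equal to $\tfrac{1}{1-\gamma}$ at the jackpot but only $\tfrac{1}{2(1-\gamma)}(1+\gamma^{m})$ at shallow states, because the optimal policy must climb $m$ discounted steps to collect the reward.

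The argument then proceeds in three steps. First, \emph{fairness forces near-uniform exploration}: fix a node $v$ and a history $h$ under which the jackpot is not yet revealed; for each action $a$ there is a consistent $M_\ell$ whose jackpot lies below $a$, in which $Q^*_{M_\ell}(v,a)>Q^*_{M_\ell}(v,a')+\alpha$ for all $a'\neq a$ by fact (i), so \afair fairness demands $\alg(v,a,h)\ge\alg(v,a',h)$; ranging over all $a$, these inequalities force $\alg(v,\cdot,h)$ toward the uniform distribution. To convert the per-MDP ``with probability $1-\delta$'' guarantee into a statement about the realized run, I would place the uniform prior on the jackpot index and exploit symmetry: before discovery the posterior over the jackpot location is uniform over unexplored leaves, so any bias away from uniform at $v$ breaks fairness on a matching constant fraction of the family, which (after averaging and a union bound over the at most $m$ nodes on a trajectory) is incompatible with fairness holding with probability $1-\delta$ for a random member. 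Second, \emph{slow discovery}: under near-uniform play the algorithm samples an essentially uniform random leaf every $m$ steps, so for a constant fraction of jackpot indices it fails to reach the jackpot within $\Omega(m\,k^{m})=\Omega(k^{c/(1-\gamma)})$ steps. Third, \emph{suboptimality}: before discovery all visited states are shallow nodes or non-jackpot leaves, whose $V^*_M$ values are at most $\tfrac{1}{2(1-\gamma)}(1+\gamma^{m})$ by fact (ii), while $\E_{s\sim\mu^*}[V^*_M(s)]\approx\tfrac{1}{1-\gamma}$, so the optimality gap is at least $\tfrac{1-\gamma^{m}}{2(1-\gamma)}>\tfrac{2\epsilon}{1-\gamma}$ for every $\epsilon$ below the stated threshold $\tfrac{1-e^{c-1}}{16}$ (choosing $m=\Theta(1/(1-\gamma))$ makes $\gamma^{m}$ a constant of the right magnitude, and the threshold leaves ample slack for teleport and mixing errors).

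The step I expect to be the crux is the first one: turning the population-level fairness constraint (for each fixed MDP, the ordering holds on a $1-\delta$ fraction of histories) into a genuine per-round lower bound on how uniformly the single, adaptively-run algorithm must behave on its realized trajectory. The honest route is the Bayesian/averaging argument above --- analyze the algorithm against a uniformly random jackpot index, show that any non-uniform node-behavior simultaneously speeds up discovery and, on a matching fraction of the family, violates \afair fairness, and then balance these two effects. Everything else (the $Q^*_M$-gap and $V^*_M$ computations, the hitting-time bound for uniform search, and the unichain repair by teleportation) is routine once $m$ is pinned down so that $k^{m}=\Theta(k^{c/(1-\gamma)})$ while $\gamma^{m}$ remains a constant strictly below $1$.
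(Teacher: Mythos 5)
There is a genuine gap, and it lies in the construction itself: your witness MDP has $n = \Theta\bigl(k^{c/(1-\gamma)}\bigr)$ states, and this drains the theorem of its content. On a $k$-ary tree of depth $m$ with $k^{m}$ leaves, the bound you derive, $\Omega(m\,k^{m})$ steps, holds for \emph{every} algorithm, fair or unfair, by the counting argument you yourself invoke in your second step: in $T$ steps at most $T/m+1$ distinct leaves can be visited, histories that never visit the jackpot leaf have identical probability under every member of the family, so some jackpot index defeats any given algorithm within $o(m k^{m})$ steps. Your first step (fairness forces near-uniform play), which you call the crux, is never actually needed --- a symptom that the construction carries no fairness content. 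Consequently the bound cannot do the job Theorem~\ref{thm:lower1} is used for: it is perfectly consistent with the existence of an \afair fair algorithm whose running time is polynomial in $n$, $k$, $\Te$ and $\tfrac{1}{1-\gamma}$, since on your instance any $\mathrm{poly}(n)$ bound already exceeds $k^{c/(1-\gamma)}$, whereas the theorem is invoked precisely to show that \ecubep's exponential dependence on $\tfrac{1}{1-\gamma}$ is unavoidable. The paper's witness is the chain of Definition~\ref{def:mdp-lb} with only $n = \lceil \log(1/(2\alpha))/(1-\gamma)\rceil$ states, so that $k^{c/(1-\gamma)}$ is super-polynomial in every parameter of the instance. (Under a fully literal reading of Definition~\ref{def:performance}, in which $\T$ may not depend on $M$, the statement is trivially true for all algorithms --- take any MDP with more than $\T$ states --- so the only non-vacuous reading is the one with a small witness, and your proof does not deliver it.)

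Your reason for abandoning the chain is also not quite right, although you have put your finger on a real weakness of the paper's own write-up. You argue that \afair fairness can never forbid advancing on the chain because the $Q^{*}$-gap of the advancing action is never negative; this is true within the family $\{M(x)\}$, but Definition~\ref{def:fair-approx} quantifies over \emph{all} MDPs, and the constraints binding at a history $h$ come from every MDP that assigns $h$ non-negligible probability. In particular, the MDP that agrees with all observations so far but makes the first unvisited state a zero-reward absorbing state satisfies $Q^{*}(s_f,L)-Q^{*}(s_f,R)\approx \tfrac{\gamma}{2(1-\gamma)}\gg\alpha$ at the frontier state $s_f$, which together with the $M(1)$ constraint forces exactly $50$-$50$ play there; so the chain is not toothless. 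What is true --- and this is the soft spot you sensed --- is that such reversed constraints bind at the exploration frontier rather than everywhere (from already-explored states the $Q^{*}$-optimal continuation simply steers around any hypothesized trap), while the paper's proof asserts, citing only the $M(1)$ gap, that equal-probability play is forced at \emph{every} state; that gap by itself yields only $\alg(s,R,h)\ge \alg(s,L,h)$, the direction that helps the learner. A correct proof must therefore keep the state space small and do the work the paper compresses into one line: exhibit, for each reachable history, consistent MDPs reversing the gap by more than $\alpha$ at the states that matter, and manage the per-MDP $1-\delta$ failure budgets when many such MDPs are invoked. Replacing the chain by a tree with exponentially many states sidesteps this difficulty only by proving a vacuous statement.
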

The MDP in Figure~\ref{fig:lb} also witnesses the claim of Theorem~\ref{thm:lower1} when
$n = \lceil \frac{\log(1/(2\alpha))}{1-\gamma}\rceil$.  The discount
factor $\gamma$ is generally taken as a constant, so in most
interesting cases $\tfrac{1}{1-\gamma} \ll n$: this lower bound is
substantially less stringent than the lower bounds proven for fairness
and \pfair fairness.  Hence, from now on, we focus
on designing algorithms satisfying \afair fairness with learning
rates polynomial in every parameter but $\tfrac{1}{1-\gamma}$,
\ifnum\bug=1 since we show that the learning rate of any \afair
algorithm has a super-polynomial dependence on $\tfrac{1}{1-\gamma}$.
\else and with tight dependence on $\tfrac{1}{1-\gamma}$.  \fi

\section{A Fair and Efficient Learning Algorithm}
\label{sec:upper-bound}

We now present an \afair fair algorithm, \ecubep with
the performance guarantees stated below.

\begin{theorem}
  \label{thm:fair-rl} 
  Given $\epsilon > 0$, $\alpha > 0$,
  $\delta\in\left(0, \tfrac{1}{2}\right)$ and $\gamma \in [0, 1)$ as
  inputs, \ecubep is an $\alpha$-action fair algorithm which achieves
  $\epsilon$-optimality after
\begin{equation}
\label{eq:sample-complexity}
\T = \tilde O\left( \frac{n^5 \Te k^{\frac{1}{1-\gamma}+5}}{\min\{\alpha^4,
\epsilon^4\} \epsilon^2\left(1-\gamma\right)^{12}}\right)
\end{equation}
steps where $\tilde O$ hides poly-logarithmic terms.
\end{theorem}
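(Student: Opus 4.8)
The plan is to design \ecubep as an adaptation of the \ecube algorithm of \citet{KearnsS02}, replacing its fairness-violating ``optimism in the face of uncertainty'' with a scheme that is \afair fair, and then to bound the running time by carefully combining the sample-complexity machinery of \ecube with the overhead imposed by the fairness constraint. First I would fix the bookkeeping of ``known'' states: a state $s$ becomes known once every action in $s$ has been tried enough times (say $m$ times) that, by standard concentration (Hoeffding/Chernoff plus a union bound over the $nk$ state-action pairs), the empirical reward means $\bar R$ and empirical transition distributions $\hat P$ are within a prescribed accuracy of their true values with probability $1-\delta$. The required $m$ will be polynomial in $n$, $\tfrac{1}{1-\gamma}$, $\tfrac{1}{\epsilon}$, $\tfrac{1}{\alpha}$ and logarithmic in $\tfrac{1}{\delta}$; the novelty relative to \ecube is that our accuracy target must be fine enough to estimate the $Q^*_M$ values to within $O(\alpha)$, since we need to build the restricted MDP $M^\alpha$ of Definition~\ref{def:fair-mdp} reliably.

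The core of the argument is an explore-or-exploit dichotomy run on the \afair-fair restricted dynamics. On the known states I would form the estimated MDP \hmknown, compute its $\alpha$-restricted version, and solve two offline planning problems over an $\He$-step horizon: an \emph{exploitation} policy maximizing estimated discounted reward, and an \emph{exploration} policy on the MDP that assigns reward $1$ to unknown states and $0$ to known ones. By Observations~\ref{obs:m-alphaa} and~\ref{obs:m-alpha-opt}, any policy restricted to near-optimal actions is $\alpha$-action fair in $M$ while the optimal restricted policy remains genuinely optimal, so both computed policies are fair to follow. The dichotomy is the usual one: either the exploitation policy already achieves near-optimal value from the known states (in which case exploiting for $T \geq \Te$ steps yields the $\tfrac{2\epsilon}{1-\gamma}$ bound of Definition~\ref{def:performance} via Lemma~\ref{lem:mixing2}), or the optimal restricted policy must place non-trivial probability on reaching an unknown state within $\He$ steps — this ``escape probability'' is precisely the proof that exploring is itself consistent with optimal (hence fair) behavior. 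When neither branch fires we are forced to explore randomly, and since the uniform policy is fair (Observation~\ref{obs:random-walk}) this is safe.

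The running-time bound then comes from counting how often exploration can occur. Each successful exploration visit to an unknown state increments a counter toward the threshold $m$, and there are at most $nkm$ such visits before every state is known; by a pigeonhole/amortization argument each exploration attempt succeeds with probability at least the escape probability $\Omega(\tfrac{\epsilon}{\He})$ per attempt, and each attempt costs a mixing-time block of $O(\Te)$ rounds to reach its stationary behavior. Multiplying the number of required exploration blocks ($\poly(n,k,\tfrac{1}{1-\gamma},\tfrac{1}{\epsilon},\tfrac{1}{\alpha})$) by the per-block cost ($\tilde O(\Te)$), and inflating the per-state-action sample requirement $m$ by the $k^{\frac{1}{1-\gamma}}$ factor that is unavoidable because reaching deep states to confirm their $Q^*$ values costs exponentially in the horizon (exactly as Theorem~\ref{thm:lower1} shows is necessary), yields the claimed $\T$.

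The hard part will be propagating the $\alpha$-accuracy requirement cleanly through the value-function estimates: I must show that $O(\alpha)$-accurate estimates of $\bar R_M$ and $P_M$ on known states translate into estimates of $Q^*_M$ good enough that the \emph{estimated} restricted MDP $\hat M^\alpha$ contains only truly-$\alpha$-near-optimal actions (so fairness is never violated) yet still retains a truly-optimal action (so no performance is lost). This is delicate because $Q^*$ errors compound over the horizon by a factor of roughly $\tfrac{1}{1-\gamma}$ (the simulation lemma of \citet{KearnsS02}), and near the fairness boundary a small estimation error could wrongly include a substantially-worse action or wrongly exclude the best one — I expect to absorb this by setting the target accuracy to $\Theta\!\big(\min\{\alpha,\epsilon\}(1-\gamma)\big)$, which is exactly where the $\min\{\alpha^4,\epsilon^4\}$ and the high powers of $(1-\gamma)$ in the denominator of \eqref{eq:sample-complexity} originate.
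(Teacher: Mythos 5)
There is a genuine gap, and it sits exactly at the step your proposal treats as routine bookkeeping: how the algorithm obtains the $Q^*_M$ estimates that define the fairness restriction. Your known-state definition is purely local --- each action at $s$ is tried $m$ times so that empirical rewards and transitions concentrate, with $m$ \emph{polynomial} in all parameters. But $\alpha$-action fairness (Definition~\ref{def:fair-approx}) is stated with respect to $Q^*_M$, the optimal state-action values of the \emph{full} MDP $M$, and these depend on the rewards and dynamics of states that are still unknown. No accuracy of local estimates on the known set can certify that an action at a known state is within $\alpha$ of optimal in $M$: an action leading into the unknown region may look worthless in the known-state MDP $\mknown$ yet be the uniquely best action in $M$, and a restricted MDP built from $\mknown$-based estimates would exclude it and then deterministically favor a substantially worse action --- precisely a fairness violation. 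This is why the paper's Definition~\ref{def:known} does not count per-action visits at all: a state becomes known only after $m_Q = k\max\{m_1, m_2\}$ uniformly random trajectories of length $\He$ have been taken from it, and the $Q^*_M(s,a)$ estimates come from Theorem~\ref{thm:kmn99} (Kearns--Mansour--Ng), which estimates $V^\pi_M$ \emph{simultaneously for all} $|\Pi| = k^n$ policies from random trajectories, at sample cost $m_1 = O\bigl(k^{\He+3} n \bigl(\tfrac{1}{(1-\gamma)\alpha}\bigr)^2 \log(k/\delta)\bigr)$. That bound is the actual source of the $k^{1/(1-\gamma)}$ factor in Equation~\ref{eq:sample-complexity}.

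Relatedly, your justification of the exponential factor --- ``reaching deep states costs exponentially, as Theorem~\ref{thm:lower1} shows is necessary'' --- conflates a lower bound with an algorithmic mechanism. In your accounting, $m$ is polynomial and the number of exploration blocks is polynomial, so nothing in your argument ever produces a $k^{1/(1-\gamma)}$ term; you append it by fiat. (Indeed, if your local-concentration scheme sufficed, the resulting algorithm would be \afair fair with running time polynomial in $\tfrac{1}{1-\gamma}$, contradicting Theorem~\ref{thm:lower1} --- a sign the scheme cannot be sound.) The rest of your architecture does match the paper: restricted MDPs justified via Observations~\ref{obs:m-alphaa} and~\ref{obs:m-alpha-opt}, an explore-or-exploit dichotomy (the paper's Lemma~\ref{lem:ex2}, run with escape horizon $2\Te$ rather than $\He$), random trajectories justified by Observation~\ref{obs:random-walk}, amortizing exploration against $\Te$-step exploitation phases, and a simulation-lemma transfer from estimated to true MDPs (the paper's Lemma~\ref{lem:sim}). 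But without the trajectory-based estimation of $Q^*_M$, the fairness claim --- and with it the theorem --- does not go through. A smaller omission: deciding whether to explore requires estimating the escape probability of the optimal policy of $\hmunknown^\alpha$; the paper does this by simulating that policy for $2\Te$ steps and applying a Chernoff bound, which is where Assumption~\ref{assumpt:known} enters.
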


The running time of \ecubep (which we have not attempted to optimize) is polynomial in all the parameters of the MDP  except
$\tfrac{1}{1-\gamma}$; Theorem~\ref{thm:lower1} implies that
\ifnum\bug=1 no \afair fair algorithm can have a running time that is
polynomial in $\tfrac{1}{1-\gamma}$.  \else this exponential dependence on
$\tfrac{1}{1-\gamma}$ is necessary.  \fi 

Several more recent
algorithms (e.g. R-MAX~\citen{BrafmanT02}) have improved upon the
performance of \ecube. We adapted \ecube~primarily for its
simplicity. While the machinery required to properly balance
fairness and performance is somewhat involved, the basic ideas of our
adaptation are intuitive. We further note that subsequent algorithms
improving on \ecube~tend to heavily leverage the principle of
``optimism in face of uncertainty": such behavior often violates
fairness, which generally requires \emph{uniformity} in the face of
uncertainty. Thus, adapting these algorithms to satisfy fairness is
more difficult.  This in particular suggests
\ecube~as an apt starting point for designing a fair planning
algorithm.

The remainder of this section will explain \ecubep, beginning with a high-level
description in Section~\ref{sec:high}. We then define the ``known" states
\ecubep uses to plan in Section~\ref{sec:known}, explain this planning process
in Section~\ref{sec:fair-planning}, and bring this all together to prove
\ecubep's fairness and performance guarantees in Section~\ref{sec:analysis}. 

\subsection{Informal Description of \ecubep}
\label{sec:high}

\ecubep relies on the notion of ``known'' states.  A state $s$
is defined to be \emph{known} after all actions have been chosen
from $s$ enough times to confidently estimate relevant reward
distributions, transition probabilities, and $Q^{\pi}_M$ values for each
action. At each time $t$, \ecubep then uses known states to reason about the 
MDP as follows:
\begin{itemize}
\item If in an unknown state, take a uniformly random trajectory of length
$\He$.
\item If in a known state, compute \emph{(i)} an exploration policy which escapes
to an unknown state quickly and $p$, the probability that this policy reaches
an unknown state within $2T^*_\epsilon$ steps, and \emph{(ii)} an exploitation
policy which is near-optimal in the known states of $M$.
\begin{itemize}
	\item If $p$ is large enough, follow the exploration policy; otherwise, follow
	the exploitation policy.
\end{itemize}
\end{itemize}

\ecubep thus relies on known states to balance exploration and exploitation in
a reliable way. 
While \ecubep and \ecube share this general idea, fairness forces \ecubep to
more delicately balance exploration and exploitation. For example, while
both algorithms explore until states become ``known", the definition of a known
state must be much stronger in \ecubep than in \ecube because \ecubep
additionally requires accurate estimates of actions' $Q^{\pi}_M$ values in
order to make decisions without violating fairness. For this reason, \ecubep
replaces the deterministic exploratory actions of \ecube with random
trajectories of actions from unknown states. These random trajectories are then
used to estimate the necessary $Q^\pi_M$ values.

In a similar vein, \ecubep requires particular care in computing exploration and exploitation
policies, and must restrict the set of such policies to fair exploration and fair exploitation
policies. Correctly formulating this restriction process to balance fairness
and performance relies heavily on the observations about the relationship
between fairness and performance provided in Section~\ref{sec:fairness}.
\subsection{Known States in \ecubep}
\label{sec:known}
We now formally define the notion of known states for \ecubep. We say a state
$s$ becomes known when one can compute good estimates of
\emph{(i)} $R_M(s)$ and $P_M(s,a)$ for all $a$, and \emph{(ii)}
$Q^{*}_M(s,a)$ for all $a$.
\begin{definition}[Known State]
\label{def:known}
Let
\[
m_1=O\left(k^{\He+3}n\left(\frac{\rmax}{\left(1-\gamma\right)\alpha}\right)^2 \log\left(\frac{k}{\delta}\right)\right) \text{ and }
m_2=O\left(\left(\frac{n}{\min\{\epsilon, \alpha\}}\right)^4\He^8\log\left(\frac{1}{\delta}\right)\right).
\]
A state $s$ becomes \emph{known} after taking
\begin{equation}
m_{Q} := k \cdot\max\{m_1, m_2\}
\label{eq:Q-known}
\end{equation}
length-$\He$ random trajectories from $s$.
\end{definition}
It remains to show that motivating conditions \emph{(i)} and \emph{(ii)} indeed
hold for our formal definition of a known state.
Informally, $m_1$ random trajectories suffice to ensure that we
have accurate estimates of all $Q^*_M(s,a)$ values, and $m_2$ random
trajectories suffice to ensure accurate estimates of the transition
probabilities and rewards.

To formalize condition \emph{(i)}, we rely on Theorem~\ref{thm:kmn99},
connecting the number of random trajectories taken from $s$ to the accuracy
of the empirical $V^\pi_M$ estimates.

\begin{theorem}[Theorem 5.5,~\citet{KearnsMN99}]
\label{thm:kmn99}
 For any state $s$ and $\alpha > 0$, after
$$m = O\left(k^{\He+3}\left(\frac{\rmax}{\left(1-\gamma\right)\alpha}\right)^2\log\left(\frac{|\Pi|}{\delta}\right)\right)$$
random trajectories of length $\He$ from $s$, with probability of at least
$1-\delta$, we can compute estimates $\hat{V}_M^{\pi}$ such that 
$|V^\pi_M\left(s\right) - \hat{V}^\pi_M\left(s\right)|\leq \alpha$,
simultaneously for all $\pi\in \Pi$.
\end{theorem}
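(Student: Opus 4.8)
The plan is to use the ``reusable trajectories'' idea: a single pool of $m$ uniformly random length-$\He$ trajectories from $s$ is reused to build an unbiased estimate of $V^\pi_M(s)$ for \emph{every} deterministic $\pi \in \Pi$ simultaneously, after which uniform convergence follows from a variance-based tail bound together with a union bound over $\Pi$.

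First I would fix a single $\pi \in \Pi$. The key structural observation is that when actions are drawn uniformly at random, the action taken at each of the $\He$ steps agrees with $\pi$'s prescribed action with probability exactly $1/k$, independently of the state reached; hence a random trajectory is \emph{consistent} with $\pi$ (matches $\pi$ at every visited state) with probability exactly $k^{-\He}$, and conditioned on consistency it is distributed exactly as a trajectory generated by $\pi$. Writing $C_\pi(\tau) \in \{0,1\}$ for the consistency indicator and $R(\tau) = \sum_{i=1}^{\He}\gamma^{i-1} r_i \in [0, \tfrac{1}{1-\gamma}]$ for the observed discounted return of a trajectory $\tau$, the per-trajectory estimator $X_\pi(\tau) = k^{\He} C_\pi(\tau) R(\tau)$ is unbiased for the truncated value, $\E[X_\pi(\tau)] = V^\pi_M(s,\He)$. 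Averaging over the $m$ trajectories gives $\hat V^\pi_M(s)$, which I then relate to the true infinite-horizon value $V^\pi_M(s)$ using the horizon-time lemma (\citet{KearnsS02}, Lemma 2): it bounds the truncation error $|V^\pi_M(s,\He) - V^\pi_M(s)|$, which is absorbed into the $\alpha$ budget.

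The quantitative heart of the argument is the concentration step. A naive range bound gives $X_\pi \le k^{\He}/(1-\gamma)$, and plain Hoeffding would then force $m$ to scale like $k^{2\He}$. To recover the stated dependence I would instead bound the variance: since $C_\pi^2 = C_\pi$, we have $\E[X_\pi^2] = k^{2\He}\,\E[C_\pi R^2] = k^{\He}\,\E[R^2 \mid \text{consistent}] \le k^{\He}/(1-\gamma)^2$, so a single trajectory contributes variance only $O(k^{\He}/(1-\gamma)^2)$. Applying Bernstein's inequality with this variance (the $k^{\He}/(1-\gamma)$ range controlling only the lower-order term) yields a small failure probability once $m = \tilde O\!\left(k^{\He}\left(\tfrac{1}{(1-\gamma)\alpha}\right)^2 \log(1/\delta)\right)$. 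Taking a union bound over all $\pi \in \Pi$ replaces $\log(1/\delta)$ by $\log(|\Pi|/\delta)$ and gives the claimed bound, with the remaining polynomial-in-$k$ slack arising from constants and from accounting for the random number of consistent trajectories (which I would not optimize).

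The main obstacle I anticipate is precisely this variance-based concentration: the importance weight $k^{\He}$ makes the estimator enormous in the worst case, so a crude range argument is exponentially lossy, and one must exploit that $X_\pi$ is nonzero only on a $k^{-\He}$ fraction of trajectories to get the $k^{\He}$ (rather than $k^{2\He}$) dependence. A secondary technical point is that all $|\Pi|$ estimates are computed from the \emph{same} trajectory pool, which is legitimate since $C_\pi$ is an explicit deterministic function of each recorded trajectory; hence the union bound and the uniform-in-$\pi$ guarantee go through despite the estimators being statistically dependent across policies.
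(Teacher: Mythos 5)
This statement is imported by the paper as a black box --- it is cited directly from Kearns--Mansour--Ng (their reusable-trajectories result) and the paper contains no proof of it to compare against. Your reconstruction is correct and is essentially the argument behind the cited result: uniform random trajectories, the exact $k^{-\He}$ consistency probability for a deterministic policy, the importance-weighted unbiased estimator, the variance bound $\E[X_\pi^2] \leq k^{\He}/(1-\gamma)^2$ feeding Bernstein rather than a lossy range-based Hoeffding bound, and a union bound over $\Pi$ (which, as you note, is unaffected by the estimators sharing one trajectory pool). The only loose end is bookkeeping you inherit from the paper's own statement: the trajectories have length $\He$ (tied to $\epsilon$) while the accuracy target is $\alpha$, so absorbing the truncation error into the $\alpha$ budget implicitly requires the horizon to be set against $\alpha$ as well --- a parameter-matching detail, not a flaw in the argument.
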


Theorem \ref{thm:kmn99} enables us to translate between the number of
trajectories taken from a state and the uncertainty about its
$V_M^\pi$ values for all policies (including $\pi^*$ and hence
$V^*_M$).  Since $|\Pi| = k^n$, we substitute
$\log\left(|\Pi|\right) = n\log\left(k\right)$.  To estimate
$Q^*_M\left(s,a\right)$ values using the $V^*_M\left(s\right)$ values
we increase the number of necessary length-$\He$ random trajectories
by a factor of $k$.

For condition \emph{(ii)}, we adapt the analysis of
\ecube\citen{KearnsS02}, which states that if each action in a state $s$
is taken $m_2$ times, then the transition probabilities and reward in
state $s$ can be estimated accurately (see Section~\ref{sec:analysis}).  
\subsection{Planning in \ecubep}
\label{sec:fair-planning}

We now formalize the planning steps in \ecubep from known states.
For the remainder of our exposition, we make
Assumption~\ref{assumpt:known} for convenience (and show how to remove
this assumption in the Appendix).
\begin{assumpt}
\label{assumpt:known}
$\Te$ is known.
\end{assumpt}

\begin{figure}
\centering
\includegraphics[width=0.9\textwidth]{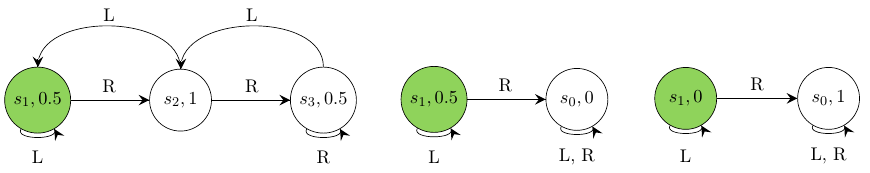}
\caption{Left: An MDP $M$ with two actions ($L$ and $R$) and deterministic transition functions and rewards. Green denotes the 
set of known states $\known$.
Middle: $\mknown$. Right: $\munknown$. \label{fig:induced}}
\end{figure}

\ecubep constructs two ancillary MDPs for planning: $\mknown$ is the 
\emph{exploitation} MDP, in which the unknown states of $M$ are condensed into a
single absorbing state $s_0$ with no reward.  In the known states $\known$,
transitions are kept intact and the rewards are deterministically set to their
mean value. $\mknown$ thus incentivizes exploitation by giving reward only for
staying within known states. In contrast, $\munknown$ is the \emph{exploration}
MDP, identical to $\mknown$ except for the rewards. The rewards in the known
states $\known$ are set to $0$ and the reward in $s_0$ is set to $\rmax$. 
$\munknown$ then incentivizes exploration by giving reward only for escaping to
unknown states. See the middle (right) panel of
Figure~\ref{fig:induced} for an illustration of $\mknown$ ($\munknown$), and
Appendix for formal definitions.

\ecubep uses these constructed MDPs to plan according to the following
natural idea: when in a known state, \ecubep constructs $\hmknown$ and
$\hmunknown$ based on the estimated transition and rewards observed so far (see
the Appendix for formal definitions), and then uses these to
compute additional restricted MDPs $\hmknown^\alpha$ and $\hmunknown^\alpha$ 
for \afair fairness. \ecubep then uses these restricted MDPs to choose between
exploration and exploitation.

More formally, if the optimal policy in $\hmunknown^\alpha$ escapes to the
absorbing state of $\mknown$ with high enough probability within $2\Te$ steps,
then \ecubep explores by following that policy. Otherwise, \ecubep exploits by
following the optimal policy in $\hmknown^\alpha$ for $\Te$ steps. While
following either of these policies, whenever \ecubep encounters an unknown
state, it stops following the policy and proceeds by taking a length-$\He$
random trajectory.
\subsection{Analysis of \ecubep}
\label{sec:analysis}

In this section we formally analyze \ecubep~and prove
Theorem~\ref{thm:fair-rl}. We begin by proving that $\mknown^{\alpha}$
is useful in the following sense: $\mknown^{\alpha}$ has at least one
of an exploitation policy achieving high reward or an exploration
policy that quickly reaches an unknown state in $M$.  

\begin{lemma}[Exploit or Explore Lemma]
\label{lem:ex2}
For any state $s\in\known$, $\beta\in(0,1)$ and any $T> 0$ at least
one of the statements below holds:
\begin{itemize}
\item there exists an \emph{exploitation policy} $\pi$ in
  $\mknown^\alpha$ such that
\begin{align*}
\hspace{-1mm}\max_{\bar\pi\in\Pi}\E\sum_{t=1}^{T} V^{\bar\pi}_{M}\left(\bar\pi^t(s), T\right)
- \E\sum_{t=1}^{T} V^{\pi}_{\mknown}\left(\pi^t(s), T\right) \leq \beta T
\end{align*}
where the random variables $\pi^t(s)$ and $\bar\pi^t(s)$ denote the
states reached from $s$ after following $\pi$ and $\bar\pi$ for $t$
steps, respectively.
\item there exists an \emph{exploration policy} $\pi$ in $\mknown^\alpha$
  such that the probability that a walk of $2T$ steps from $s$ following
  $\pi$ will terminate in $s_0$ exceeds $\beta/T$.
\end{itemize}
\end{lemma}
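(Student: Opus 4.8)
The plan is to adapt the \emph{Explore or Exploit} dichotomy of \citet{KearnsS02} to two features special to our setting: the benchmark in the first alternative is a \emph{sum of $T$-step values along a trajectory} rather than a single value, and every witnessing policy must live in the fairness-restricted MDP $\mknown^\alpha$. The pivot is the escape probability: for a policy $\pi$ in $\mknown^\alpha$ let $q(\pi)$ be the probability that a length-$2T$ walk from $s$ following $\pi$ reaches the absorbing state $s_0$. Because $\mknown$ and $\munknown$ share transitions and $s_0$ absorbs exactly the mass that leaves $\known$ in $M$, $q(\pi)$ equals the probability that $\pi$ leaves $\known$ within $2T$ steps. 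I would split on whether $\max_{\pi\in\mknown^\alpha}q(\pi)$ exceeds the threshold $\beta/T$. If it does, the maximizing policy is an exploration policy witnessing the second alternative. So assume instead that every policy in $\mknown^\alpha$ escapes within $2T$ steps with probability at most $\beta/T$, and produce an exploitation policy.

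For the exploitation bound I would take $\pi$ to be an optimal policy of $\mknown^\alpha$ and rewrite both sums in the first alternative as expected reweighted rewards over the first $2T$ steps: since each summand is itself a $T$-step value and these are summed over $t\le T$, only rewards at times $\tau\le 2T$ receive nonzero weight, and the total weight is at most $T\sum_{j=1}^{T}\gamma^{j-1}\le T\min\{\tfrac{1}{1-\gamma},T\}$. This is exactly why the second alternative quantifies escape over $2T$ rather than $T$ steps. Coupling the reward processes of $M$ and $\mknown$ under $\pi$ so that they agree until the first escape, the entire discrepancy between the two sums is the post-escape reweighted reward, bounded by $q(\pi)$ times the total weight, i.e. at most $\tfrac{\beta}{T}\cdot T\cdot T=\beta T$; hence $\pi$'s $\mknown$-sum is within $\beta T$ of its own $M$-sum. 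To compare against the benchmark $\max_{\bar\pi\in\Pi}\E\sum_t V^{\bar\pi}_M(\bar\pi^t(s),T)$, which ranges over \emph{all} policies of $M$, I would invoke Observation~\ref{obs:m-alpha-opt}: restricting to $\alpha$-optimal actions preserves optimal value, so value-optimal behavior in $M$ is available inside the restricted MDP, and off the low-probability escape event $\mknown$ and $M$ agree throughout $\known$; hence the $\mknown^\alpha$-optimal $\pi$ competes with the benchmark up to the same $\beta T$ slack.

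The step I expect to be the main obstacle is precisely this benchmark reconciliation. Two things must be controlled at once: first, the benchmark maximizes a reweighted sum of values along a $\bar\pi$-dependent trajectory, which is \emph{not} a single $V^*_M$ value and whose maximizer need not be the discounted-optimal policy, so I must bound it using the per-state inequality $V^{\bar\pi}_M(\cdot,T)\le V^*_M(\cdot,T)$ together with the fact that $\alpha$-optimal actions suffice to realize $V^*_M$ (Observation~\ref{obs:m-alpha-opt}); second, the benchmark ranges over unrestricted policies while the witness must lie in $\mknown^\alpha$, so I must argue that any behavior achieving large benchmark value either stays within $\known$ using $\alpha$-optimal actions---and is therefore matched by the $\mknown^\alpha$-optimal exploitation policy---or escapes with probability exceeding $\beta/T$ using $\alpha$-optimal actions, which places us in the exploration alternative after all. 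Making ``$Q^*_{\mknown}\approx Q^*_M$ off the escape event'' precise enough that passing from $M^\alpha$ to $\mknown^\alpha$ costs only $O(\beta T)$ is the delicate part; the remaining coupling bookkeeping on the escape event, where the escaping mass is charged its maximal reweighted contribution, is routine.
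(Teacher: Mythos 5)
Your proposal has a genuine gap, and it sits exactly where you flagged the ``main obstacle'' --- but the obstacle is not merely delicate, it is fatal to your chosen pivot. You case-split on whether $\max_{\pi\in\mknown^\alpha} q(\pi) > \beta/T$. In the case where this maximum is small, your hypothesis only controls the escape behavior of policies restricted to actions that are $\alpha$-optimal with respect to $Q^*_{\mknown}$; it says nothing about the escape behavior of the benchmark policy, which ranges over \emph{all} of $\Pi$ in $M$. This is not a technicality one can argue around: in $\mknown$, escaping leads to the absorbing state $s_0$ with reward $0$ forever, so an escaping action typically has \emph{low} $Q^*_{\mknown}$ value and is excluded from $\mknown^\alpha$ precisely when staying inside $\known$ is profitable. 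Concretely, if from every known state one action stays in $\known$ with mean reward $0.9$ while another leads to an unknown state of $M$ with reward $1$, then for small $\alpha$ the restricted MDP $\mknown^\alpha$ contains no escaping action at all, so your Case B hypothesis holds vacuously ($\max_{\pi\in\mknown^\alpha} q(\pi)=0$), yet the benchmark policy escapes with probability $1$ and its trajectory-sum exceeds that of every policy of $\mknown$ by roughly $0.1\,T/(1-\gamma)$, which is larger than $\beta T$ once $\gamma$ is close to $1$. Your proposed bridge --- that high benchmark value implies either near-matching exploitation or escape ``using $\alpha$-optimal actions'' --- is exactly the claim that fails, because $\alpha$-optimality measured in the \emph{exploitation} MDP is the wrong yardstick for escaping actions. (This also shows that the second bullet of the lemma must be read as producing a witness in the restricted \emph{exploration} MDP $\munknown^\alpha$, whose $Q^*$ values reward escape; that is what the algorithm in Section~\ref{sec:fair-planning} computes and what the paper's proof ultimately invokes.)

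The paper's proof avoids this trap by running the dichotomy on the benchmark policy itself rather than splitting on restricted-policy escape probabilities. Let $\pi$ attain the benchmark with per-step value $\tilde V$, and decompose each $V^{\pi}_M(\pi^t(s),T)$ over length-$T$ paths into those staying in $\known$ and those hitting an unknown state. The known-only paths contribute at most $\E\sum_t V^{\pi}_{\mknown}(\pi^t(s),T)$, so either this $\mknown$-sum is within $\beta T$ of $T\tilde V$ (exploitation), or the escaping paths carry at least $\beta T$ of value; since each path is worth at most $T$ and any escaping path begun at time $t\le T$ is an escape of the original walk within $2T$ steps, this forces $\pi$'s \emph{own} $2T$-step escape probability above $\beta/T$ (exploration). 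Only after establishing the dichotomy for the unrestricted $\mknown$ does the paper pass to restricted MDPs: the exploitation witness is an optimal policy of $\mknown$, the exploration witness an (escape-maximizing) optimal policy of $\munknown$, and Observation~\ref{obs:m-alpha-opt} guarantees these optima survive in $\mknown^\alpha$ and $\munknown^\alpha$ respectively. If you repair your argument by pivoting on the escape probability of the benchmark policy (equivalently, of the $\munknown$-optimal policy) instead of the $\mknown^\alpha$-restricted maximum, then your coupling bookkeeping --- discrepancy bounded by escape probability times $T\cdot T$ --- does go through and essentially reproduces the paper's proof.
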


We can use this fact to reason about exploration as follows.
First, since Observation~\ref{obs:approx-fair} tells us that the optimal policy
in $M$ is \afair fair, if the optimal policy stays in the set of $M$'s known
states $\mknown$, then following the optimal policy in $\mknown^{\alpha}$ is
both optimal and \afair fair.

However, if instead the optimal policy in $M$ quickly escapes to an unknown
state in $M$, the optimal policy in $\mknown^\alpha$ may not be able to compete
with the optimal policy in $M$.
Ignoring fairness, one natural way of computing an escape policy to ``keep up"
with the optimal policy is to compute the optimal policy in $\munknown$.
Unfortunately, following this escape policy might violate \afair
fairness -- high-quality actions might be ignored in lieu of low-quality
exploratory actions that quickly reach the unknown states of $M$.  Instead, we compute an escape policy in $\munknown^\alpha$
and show that if no near-optimal exploitation policy exists in $\mknown$, then the optimal
policy in $\munknown^\alpha$ (which is fair by construction) quickly escapes to
the unknown states of $M$.

Next, in order for \ecubep to check whether the optimal policy in
$\munknown^{\alpha}$ quickly reaches the absorbing state of $\mknown$ with 
significant probability, \ecubep simulates the execution of the optimal
policy of $\munknown^{\alpha}$ for $2\Te$ steps from the known state
$s$ in $\mknown^{\alpha}$ several times, counting the ratio of the runs ending in
$s_0$, and applying a Chernoff bound; this is where
Assumption~\ref{assumpt:known} is used.

Having discussed exploration, it remains to show that the exploitation policy
described in Lemma~\ref{lem:ex2} satisfies $\epsilon$-optimality as defined in Definition~\ref{def:performance}.
By setting $T \geq \Te$ in Lemma~\ref{lem:ex2} and applying
Lemmas~\ref{lem:mixing2}~and~\ref{lem:sim}, we can prove
Corollary~\ref{cor:exploit-policy} regarding this exploitation policy.
\begin{corollary}
\label{cor:exploit-policy}
For any state $s\in\known$ and $T\geq \Te$ if there
exists an exploitation policy $\pi$ in $\mknown^\alpha$ then
\[
\left|\frac{1}{T}\E\sum_{t=1}^{T} V^{\pi}_{M}\left(\pi^t(s), T\right)-\E_{s\sim\mu^*}V^*_M\left(s\right)\right|\leq \frac{\epsilon}{1-\gamma}.
\]
\end{corollary}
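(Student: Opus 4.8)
The plan is to chain together the Exploit-or-Explore Lemma (Lemma~\ref{lem:ex2}) with the mixing-time lemma (Lemma~\ref{lem:mixing2}) and a simulation lemma (Lemma~\ref{lem:sim}), inserting $T \geq \Te$ as suggested. The target inequality compares two quantities: the time-averaged $V^\pi_M$ values of the states the exploitation policy $\pi$ visits in $M$, and the stationary-distribution average $\E_{s\sim\mu^*}V^*_M(s)$ of the optimal policy. I would bound their difference by inserting intermediate quantities and controlling each gap, so the main work is identifying the right telescoping of error terms.

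First I would unpack what Lemma~\ref{lem:ex2} gives when we set $T \geq \Te$ and choose $\beta$ appropriately (presumably $\beta = \epsilon$ or a small constant multiple of it): the existence of the exploitation policy $\pi$ means that
\[
\max_{\bar\pi\in\Pi}\E\sum_{t=1}^{T} V^{\bar\pi}_{M}\left(\bar\pi^t(s), T\right) - \E\sum_{t=1}^{T} V^{\pi}_{\mknown}\left(\pi^t(s), T\right) \leq \beta T,
\]
so that the average $V^\pi$ value $\pi$ collects in $\mknown$ is within $\beta$ per step of the best achievable, in particular within $\beta$ of what $\pi^*$ would collect. The optimal policy $\pi^*$ is a candidate for the maximizing $\bar\pi$, so I would use its $T$-step value as a lower bound on the left max. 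The role of Lemma~\ref{lem:mixing2} (applied to $\pi^*$, using $T \geq \Te = T^{\pi^*}_\epsilon$) is then to replace the time-averaged $V^*_M$ of the states $\pi^*$ visits by the stationary average $\E_{s\sim\mu^*}V^*_M(s)$, at a cost of $\tfrac{\epsilon}{1-\gamma}$; this is precisely the term that appears on the right-hand side of the corollary.

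Second I would account for the discrepancy between values computed in the \emph{true} MDP $M$ and those computed in the constructed exploitation MDP $\mknown$: Lemma~\ref{lem:ex2} controls $V^\pi_{\mknown}$, whereas the corollary's left side is stated in $M$. This is where Lemma~\ref{lem:sim} enters---since every state $\pi$ traverses is known, and known states have accurately estimated rewards and transitions, the simulation lemma guarantees that values in $\mknown$ (and in the estimated $\hmknown$) track values in $M$ up to a small additive error that folds into the overall $\epsilon$ budget. I would also need the $\He$-horizon truncation (Lemma 2 of~\citet{KearnsS02}) to pass between $V^\pi_M(\cdot, T)$ and $V^\pi_M(\cdot)$, since the corollary drops the finite-horizon argument. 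The remaining step is bookkeeping: collecting $\beta T$, the mixing error $\tfrac{\epsilon}{1-\gamma}$, the simulation error, and the horizon-truncation error, dividing by $T$ where appropriate, and choosing the constants in $\beta$ and in the known-state thresholds $m_1, m_2$ so the total lands at $\tfrac{\epsilon}{1-\gamma}$.

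The main obstacle I anticipate is the careful reconciliation of the per-step versus cumulative-and-discounted scalings, together with the two-sided absolute value in the statement. Lemma~\ref{lem:ex2} is one-sided (it only lower-bounds $\pi$'s performance relative to the optimum), so establishing the reverse inequality---that $\pi$ does not substantially \emph{exceed} the optimal stationary average---requires separately arguing that no policy's time-averaged $V^*_M$ can beat $\E_{s\sim\mu^*}V^*_M(s)$ by more than the mixing slack, again via Lemma~\ref{lem:mixing2} applied to $\pi^*$ and the optimality of $\mu^*$. Getting both directions to close at exactly $\tfrac{\epsilon}{1-\gamma}$, while threading every approximation error (mixing, simulation, horizon) through the appropriate powers of $\tfrac{1}{1-\gamma}$, is the delicate part; the individual bounds are each routine, but their aggregation into a clean single $\tfrac{\epsilon}{1-\gamma}$ is what demands the most care.
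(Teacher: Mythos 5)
Your skeleton matches the paper's own (one-sentence) proof sketch: set $T \ge \Te$ in Lemma~\ref{lem:ex2}, use $\pi^*$ as a candidate for the maximizing $\bar\pi$, convert $\pi^*$'s time-averaged values into $\E_{s\sim\mu^*}V^*_M(s)$ via Lemma~\ref{lem:mixing2}, and fold the simulation and $\He$-truncation errors into the $\epsilon$ budget. The lower-bound direction goes through essentially as you describe, and your observation that the finite-horizon/infinite-horizon mismatch between Lemma~\ref{lem:ex2} and Lemma~\ref{lem:mixing2} needs Lemma 2 of~\citet{KearnsS02} is a detail the paper itself elides. However, one of your bridges is misattributed: Lemma~\ref{lem:sim} does not relate $\mknown$ to $M$ --- it relates $\mknown$ to its empirical estimate $\hmknown$ (its role is that the exploitation policy \ecubep actually computes is optimal in $\hmknown^\alpha$, and its guarantee must be transferred to $\mknown$). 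The bridge you need from $\mknown$ to $M$ is instead the elementary one-sided domination $V^\pi_{\mknown}(s,T) \le V^\pi_M(s,T)$ for $s\in\known$, which holds because $\mknown$ agrees with $M$ on known states and collapses all unknown states into a zero-reward absorbing state; no accuracy argument is needed there.

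More substantively, your plan for the reverse inequality is unsound as stated. Lemma~\ref{lem:mixing2} applied to $\pi^*$ controls the states visited by $\pi^*$, not those visited by $\pi$, and there is no ``optimality of $\mu^*$'' of the kind you invoke: $\mu^*$ does not maximize $\E_{s\sim\mu}V^*_M(s)$ over the stationary distributions of arbitrary policies. By Lemma~\ref{lem:satinder}, $\E_{s\sim\mu^\pi}\left[V^\pi_M(s)\right] = \mu^\pi\cdot\bar{\textbf{R}}_M/(1-\gamma)$, so your claim amounts to saying $\pi^*$ nearly maximizes \emph{average} (undiscounted) reward; but a discounted-optimal policy need not be average-reward optimal at a fixed $\gamma$, even in unichain MDPs (consider a short loop with high immediate reward, which wins on discounted value, competing against a long loop whose reward is delayed but whose average reward is higher --- the time-averaged $V^*_M$ values along the long loop then exceed $\E_{s\sim\mu^*}V^*_M(s)$). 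Any rescue of the two-sided bound must exploit the fact that $\pi$ lives in the restricted MDP $\mknown^\alpha$, whose actions all have near-optimal $Q^*_M$ values, rather than mixing properties of $\pi^*$. That said, only the lower-bound direction is actually invoked in the proof of Theorem~\ref{thm:fair-rl} (Definition~\ref{def:performance} penalizes only shortfall relative to $\E_{s\sim\mu^*}V^*_M(s)$), so the direction your argument does establish is the one the paper needs.
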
 

Finally, we have so far elided the fact that \ecubep~only has access to the
\emph{empirically estimated} MDPs $\hmknown^\alpha$ and $\hmunknown^\alpha$
(see the Appendix for formal definitions). We remedy this
issue by showing that the behavior of any policy $\pi$ in $\hmknown^\alpha$
(and $\hmunknown^\alpha$) is similar to the behavior of $\pi$ in
$\mknown^\alpha$ (and $\munknown^\alpha$). To do so, we prove a stronger
claim: the behavior of  any $\pi$ in $\hmknown$ (and $\hmunknown)$ is
similar to the behavior of $\pi$ in $\mknown$ (and $\munknown)$.

\begin{lemma}
\label{lem:sim}
Let $\known$ be the set of known states and $\hmknown$ the
approximation to $\mknown$.  Then for any state $s\in \known$, any
action $a$ and any policy $\pi$, with probability at least $1-\delta$:
\begin{enumerate}
\item
$
V^{\pi}_{\mknown}(s)  - \min\{\alpha/2, \epsilon\} \leq  V^{\pi}_{\hat \mknown}(s)\leq V^{\pi}_{\mknown}(s) + \min\{\alpha/2, \epsilon\},
$
\item
$
Q^{\pi}_{\mknown}\left(s, a\right)  - \min\{\alpha/2, \epsilon\} \leq  Q^{\pi}_{\hat \mknown}\left(s, a\right)\leq Q^{\pi}_{\mknown}\left(s, a\right) + \min\{\alpha/2, \epsilon\}.
$
\end{enumerate}
\end{lemma}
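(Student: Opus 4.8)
The plan is to treat this as a \emph{simulation lemma}: $\hmknown$ differs from $\mknown$ only through estimation error in the rewards and transition probabilities of the known states, so it suffices to (a) show those estimates are uniformly accurate with high probability, and (b) show that small per-state-action errors translate into a small error in the value of \emph{every} policy. I would target intermediate accuracies $\epsilon_R$ (on rewards) and $\epsilon_P$ (on transition distributions, measured in $\ell_1$) and fix their values at the end so that the propagated error is at most $\min\{\alpha/2,\epsilon\}$. Note that this lemma draws on the $m_2$ half of the known-state definition (accurate transitions and rewards), not the $m_1$ half (accurate $Q^*$ values).

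First I would establish the concentration guarantees implied by the definition of a known state. Since $s$ becomes known only after $m_Q = k\max\{m_1,m_2\}$ length-$\He$ random trajectories are taken from it, and the first action on each trajectory is uniform over the $k$ actions, each pair $(s,a)$ is sampled at least $\approx m_Q/k \ge m_2$ times (a Chernoff bound makes this precise, absorbing a $\delta$-fraction into the failure probability). With $m_2$ samples per action, Hoeffding's inequality gives $|\hat R(s)-\bar R(s)|\le \epsilon_R$ and a standard $\ell_1$-concentration bound for empirical distributions over $n$ outcomes gives $\|\hat P(s,a)-P(s,a)\|_1\le\epsilon_P$, simultaneously for all known $s$ and all $a$, with probability at least $1-\delta$. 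Because the unknown states of $M$ are collapsed identically into the single absorbing $s_0$ in both $\mknown$ and $\hmknown$, the aggregated transition mass onto $s_0$ is estimated to the same accuracy, so the two MDPs share an identical state/action structure and differ only through $(\epsilon_R,\epsilon_P)$.

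Next I would prove the simulation bound itself. Fix any policy $\pi$ and any known start state $s$. By Lemma 2 of \citet{KearnsS02}, truncating both value functions at the horizon $\He$ changes each by at most $\epsilon$, so it suffices to compare the $\He$-step discounted returns of $\pi$ in $\mknown$ and $\hmknown$. I would couple the two trajectories step by step: the accumulated discounted discrepancy from the reward errors is at most $\sum_{i\ge1}\gamma^{i-1}\epsilon_R = \epsilon_R/(1-\gamma)$, while a union bound over the $\He$ steps shows the coupled trajectories diverge with probability at most $\He\,\epsilon_P$, and each divergence costs at most $\vmax = 1/(1-\gamma)$ in value. Hence
\[
\bigl|V^\pi_{\mknown}(s)-V^\pi_{\hmknown}(s)\bigr|\le \frac{\epsilon_R}{1-\gamma}+\frac{\He\,\epsilon_P}{1-\gamma}+\epsilon .
\]
Choosing $\epsilon_R,\epsilon_P = \mathrm{poly}\bigl(\min\{\alpha,\epsilon\}(1-\gamma)/(n\He)\bigr)$ --- precisely the accuracy delivered by the $m_2$ above --- forces the right-hand side below $\min\{\alpha/2,\epsilon\}$, giving part (1). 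Part (2) follows from a single Bellman backup: writing $Q^\pi(s,a)=\bar R(s)+\gamma\sum_{s'}P(s,a,s')V^\pi(s')$ and subtracting the two MDPs' versions bounds $|Q^\pi_{\mknown}-Q^\pi_{\hmknown}|$ by the reward error, the transition error $\epsilon_P\vmax$, and the already-bounded value gap.

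I expect the main obstacle to be the transition-error propagation in the simulation step --- controlling how a per-step $\ell_1$ error $\epsilon_P$ in the kernel compounds over the $\He$-step horizon without incurring a factor exponential in $\He$. The coupling argument keeps this linear in $\He$ (times $\vmax$), which is what makes a choice of $m_2$ polynomial in $\He$ suffice. A secondary subtlety is that the statement must hold for \emph{all} $\pi$ simultaneously; this is automatic because the simulation bound depends only on the sampled rewards and transitions and not on $\pi$, so the single high-probability event from the concentration step already covers every policy at once --- no union bound over the exponentially many policies is needed, in contrast to the $Q^*$ estimates of Theorem~\ref{thm:kmn99}, which do pay the $\log|\Pi|$ price.
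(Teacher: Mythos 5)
Your overall decomposition---(a) concentration of the empirical rewards and transitions at known states, then (b) a policy-independent simulation argument showing small model error implies small value error for \emph{every} policy---is exactly the structure of the paper's proof, which obtains (a) by citing Lemma~\ref{lem:packnown} (Lemma 5 of \citet{KearnsS02}) and (b) by citing Lemma~\ref{lem:sim3} (Lemma 4 of \citet{KearnsS02}) rather than re-deriving them. Your two side observations---that only the $m_2$ half of Definition~\ref{def:known} is needed here, and that no union bound over the $k^n$ policies is required because the only probabilistic event is the policy-independent concentration of the empirical model---are both correct and consistent with the paper.

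There is, however, a genuine accounting error in your simulation step. You truncate both value functions at the horizon $\He$, paying a truncation cost of $\epsilon$ for each MDP (so really $2\epsilon$, not the single $\epsilon$ you carry), and then claim that choosing $\epsilon_R,\epsilon_P$ small enough ``forces the right-hand side below $\min\{\alpha/2,\epsilon\}$.'' It cannot: your right-hand side contains the additive truncation term, which is a constant that no choice of $\epsilon_R,\epsilon_P$ shrinks, so the bound you actually derive is at least $\epsilon$ and can never drop below the target $\min\{\alpha/2,\epsilon\}$. The repair is routine but must be made. One option is to truncate at a finer horizon $H^\gamma_{\epsilon'}$ with $\epsilon'=\min\{\alpha/2,\epsilon\}/8$, which is only logarithmically longer than $\He$, so your choice of $\epsilon_R,\epsilon_P$ and the adequacy of $m_2$ survive. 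Alternatively, dispense with truncation entirely and run the coupling over the infinite horizon: the coupled trajectories have diverged by step $i$ with probability at most $i\epsilon_P/2$, and after divergence the per-step reward gap is at most $1$, so the transition-error contribution to the discounted value gap is at most
\begin{equation*}
\sum_{i\geq 1}\gamma^{i-1}\,\frac{i\,\epsilon_P}{2} \;=\; \frac{\epsilon_P}{2(1-\gamma)^2},
\end{equation*}
giving $\left|V^\pi_{\mknown}(s)-V^\pi_{\hmknown}(s)\right|\leq \epsilon_R/(1-\gamma)+\epsilon_P/\left(2(1-\gamma)^2\right)$ with no truncation term at all. With either fix, your parameter choices close the argument, your Bellman-backup step for the $Q^\pi$ bound is fine, and you have effectively re-proved the content of Lemma~\ref{lem:sim3} that the paper imports as a black box.
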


We now have the necessary results to prove Theorem~\ref{thm:fair-rl}.
\begin{proof}[Proof of Theorem~\ref{thm:fair-rl}]
  We divide the analysis into separate parts: the performance
  guarantee of \ecubep~and its \afair fairness.  We defer the analysis of the
  probability of failure of \ecubep to the
  Appendix.

We start with the performance guarantee and show that when
\ecubep follows the exploitation policy the average $V^*_M$ values of
the visited states is close to
$\E_{s\sim\mu^*}V^*_M(s)$. However, when following an exploration
policy or taking random trajectories, visited states' $V^*_M$ values
can be small. To bound the performance of \ecubep,
we bound the number of these exploratory steps by the MDP parameters
so they only have a small effect on overall
performance.

Note that in each $\Te$-step exploitation phase of \ecubep, the
  expectation of the average $V^*_M$ values of the visited states is
  at least $\E_{s\sim\mu^*}V^*_M(s)-\epsilon/(1-\gamma)-\epsilon/2$ by
  Lemmas~\ref{lem:mixing2},~\ref{lem:ex2} and
  Observation~\ref{obs:m-alpha-opt}.  By a Chernoff bound, the
  probability that the actual average $V^*_M$ values of the visited
  states is less than
  $\E_{s\sim\mu^*}V^*_M(s)-\epsilon/(1-\gamma)-3\epsilon/4$ is less
  than $\delta/4$ if there are at least
  $\tfrac{\log(\tfrac{1}{\delta})}{\epsilon^2}$ exploitation phases.

We now bound the total number of exploratory steps of \ecubep~by
\[ T_1 = O\left(n m_Q\He + n m_Q\frac{\Te}{\epsilon}
  \log\left(\frac{n}{\delta}\right)\right),\]
 where $m_Q$ is defined in Equation~\ref{eq:Q-known} of
  Definition~\ref{def:known}. The two components of this term bound
  the number of rounds in which \ecubep plays non-exploitatively: the
  first bounds the number of steps taken   when \ecubep follows random trajectories,
and the second bounds how many steps are taken following
  explicit exploration policies. The former bound follows from the
  facts that each random trajectory has length $\He$; that in each
  state, $m_Q$ trajectories are sufficient for the state to become
  known; and that random trajectories are taken only before all
  $n$ states are known.  The latter bound follows from the fact that \ecubep~
follows an exploration policy for   $2\Te$ steps; and 
  an exploration policy needs to be followed only
  $O(\tfrac{\Te}{\epsilon}\log(\frac{n}{\delta}))$ times before reaching an
  unknown state (since any exploration policy will end up in an
  unknown state with probability of at least $\tfrac{\epsilon}{\Te}$ according
  to Lemma~\ref{lem:ex2}, and applying a Chernoff bound); that an
  unknown state becomes known after it is visited $m_Q$ times; and
  that exploration policies are only followed before all states are
  known.

Finally, to make up for the potentially poor performance in
exploration, the number of $2\Te$ steps exploitation phases needed is at
least
\[
T_2 =  O\left(\frac{T_1(1-\gamma)}{\epsilon}\right).
\]

Therefore, after $\T= T_1 + T_2$ steps we have
\[
\E_{s\sim \mu^*}V^*_M(s)  - \frac{1}{\T}\E\sum_{t=1}^{\T} V^*_M(s_t)  \leq \frac{2\epsilon}{1-\gamma},
\]
as claimed in Equation~\ref{eq:sample-complexity}.
%
The running time of \ecubep~is $O(\tfrac{n\T^3}{\epsilon})$: the additional
$\tfrac{nT^2}{\epsilon}$ factor comes from offline computation of the optimal
policies in $\hmknown^\alpha$ and $\hmunknown^\alpha.$ 

\label{subsec:appx}
We wrap up by proving \ecubep satisfies \afair fairness in every
round.  The actions taken during random trajectories are fair (and
hence \afair fair) by Observation~\ref{obs:random-walk}.  Moreover,
\ecubep~computes policies in $\hmknown^{\alpha}$ and
$\hmunknown^{\alpha}$.  By Lemma~\ref{lem:sim} with probability at
least $1-\delta$ any $Q^*$ or $V^*$ value estimated in
$\hmknown^{\alpha}$ or $\hmunknown^{\alpha}$ is within $\alpha/2$ of
its corresponding true value in $\mknown^{\alpha}$ or
$\munknown^{\alpha}$. As a result, $\hmknown^{\alpha}$ and
$\hmunknown^{\alpha}$ \emph{(i)} contain all the optimal policies and
\emph{(ii)} only contain actions with $Q^*$ values within $\alpha$ of
the optimal actions.  It follows that any policy followed in
$\hmknown^{\alpha}$ and $\hmunknown^{\alpha}$ is $\alpha$-action fair,
so both the exploration and exploitation policies followed by
\ecubep~satisfy $\alpha$-action fairness, and \ecubep~is therefore
$\alpha$-action fair.
\end{proof} 

\section{Discussion and Future Work}
\label{sec:future}
Our work leaves open several interesting questions. For example, we give 
an algorithm that has an undesirable exponential dependence on $1/(1-\gamma)$, but we 
\ifnum\bug=1
also show that no \afair fair algorithm can run in time polynomial in $1/(1-\gamma)$.
\else
show 
that this dependence is unavoidable for any \afair fair algorithm.
\fi
Without fairness, near-optimality in learning can be achieved in time that is polynomial 
in \emph{all} of the parameters of the underlying MDP. So, we can ask:
 does there exist a \emph{meaningful} fairness notion that enables reinforcement 
 learning in time polynomial in all parameters?
 
 Moreover, our fairness definitions remain open to further modulation. It 
remains unclear whether one can \emph{strengthen} our fairness guarantee to 
bind across time rather than simply across actions available at the moment
without large performance tradeoffs. Similarly, it is not obvious whether one
can gain performance by \emph{relaxing} the every-step nature of our fairness
guarantee in a way that still forbids discrimination. These and other
considerations suggest many questions for further study; we therefore position
our work as a first cut for incorporating fairness into a reinforcement
learning setting.

\Urlmuskip=0mu plus 1mu\relax
\bibliographystyle{plainnat}
\bibliography{bib}

\appendix
\section{Omitted Proofs}
\subsection{Omitted Proofs for Section~\ref{sec:prelim}}
\label{sec:missing-proofs-prelim}

\begin{proof}[Proof of Lemma~\ref{lem:mixing2}]
Let $\hat\mu^{\pi}_T$  denote the 
distribution of $\pi$ on states of $M$ after following $\pi$ for
$T$ steps starting from $s$. Then we know
\begin{align*}
\E_{s\sim\mu^\pi}V^\pi_M(s)&\; - \frac{1}{T}\E\sum_{t=1}^T V^\pi_M(s_t) = \sum_{i=1}^n \left(\mu^{\pi}(s_i) - \hat{\mu}^{\pi}_T(s_i)\right)V^\pi_M(s_i) \\
\leq&\; \sum_{i=1}^n \left|\mu^{\pi}(s_i) - \hat{\mu}^{\pi}_T(s_i)\right|V^\pi_M(s_i) \leq  \frac{\epsilon}{1-\gamma}.
\end{align*}
The last inequality is due to the following observations: (i) $V^\pi_M(s_i)\leq \tfrac{1}{1-\gamma}$ as rewards are in $[0,1]$
and (ii) $\Sigma_{i=1}^n \left|\mu^{\pi}(s_i) - \hat{\mu}^{\pi}_T(s_i)\right|\leq \epsilon$ since $T$ is at least the $\epsilon$-mixing time of $\pi$.
\end{proof}

\subsection{Omitted Proofs for Section~\ref{sec:lower-bound}}
\label{sec:lower-bound-proofs}

We first state the following useful Lemma about $M$.
\begin{lemma}
\label{lem:v-star}
Let $M$ be the MDP in Definition~\ref{def:mdp-lb}. Then for any
$i\in\{1, \ldots, n\}$,
$V^*_M(s_i) < \tfrac{1 + 2\gamma^{n-i+1}}{2(1-\gamma)}$.
\end{lemma}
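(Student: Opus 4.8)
The plan is to bound $V^*_M(s_i)$ from above by first controlling how quickly \emph{any} policy can reach the only state whose reward may exceed $\tfrac12$ — namely $s_n$ — and then feeding this into a geometric-series estimate. The structure of $M$ makes the reachability argument immediate: from any state $s_j$, action $L$ returns to $s_1$ and action $R$ advances to $s_{\min\{j+1,n\}}$, so a single transition increases the state index by at most one. Hence, starting from $s_i$, no policy can occupy $s_n$ before step $n-i$, and every state visited strictly before then lies in $\{s_1,\dots,s_{n-1}\}$ and thus deterministically yields reward exactly $\tfrac12$. This holds pointwise along every trajectory, so it bounds the rewards collected by $\pi^*$ in particular.

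Using this, I would bound the per-step rewards of any trajectory from $s_i$: the reward at each of the first $n-i$ steps equals $\tfrac12$, and at every later step it is at most $x \le 1$. Discounting and summing the two geometric pieces then gives
\[
V^*_M(s_i) \;\le\; \tfrac12\sum_{t=0}^{n-i-1}\gamma^{t} \;+\; \sum_{t=n-i}^{\infty}\gamma^{t} \;=\; \frac{\tfrac12\left(1-\gamma^{n-i}\right)+\gamma^{n-i}}{1-\gamma} \;=\; \frac{1+\gamma^{n-i}}{2(1-\gamma)}.
\]
One could instead identify the optimal policy explicitly (move right until reaching $s_n$, then stay), but the reachability bound above avoids any case analysis on the value of $x$ and yields the same estimate since $x\le 1$.

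Finally, I would invoke the standing hypothesis $\gamma > \tfrac12$ in force throughout Section~\ref{sec:lower-bound}: it gives $1 < 2\gamma$, hence $\gamma^{n-i} < 2\gamma^{\,n-i+1}$, and therefore
\[
V^*_M(s_i) \;\le\; \frac{1+\gamma^{n-i}}{2(1-\gamma)} \;<\; \frac{1+2\gamma^{\,n-i+1}}{2(1-\gamma)},
\]
which is exactly the claimed bound (the boundary case $i=n$ reduces precisely to $1<2\gamma$, so the strictness genuinely relies on $\gamma>\tfrac12$). I do not expect a real obstacle here, as the lemma is elementary; the only points requiring care are \emph{(i)} arguing the $n-i$ lower bound on the hitting time of $s_n$ carefully enough that the ``first $n-i$ steps have reward $\tfrac12$'' claim is airtight for \emph{every} policy and every value of $x$, and \emph{(ii)} making explicit that it is the $\gamma>\tfrac12$ assumption that converts the exact estimate $\tfrac{1+\gamma^{n-i}}{2(1-\gamma)}$ into the stated strict inequality.
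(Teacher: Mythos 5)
Your proof is correct and takes essentially the same route as the paper's: decompose the discounted return from $s_i$ into the steps before reaching $s_n$, each worth exactly $\tfrac{1}{2}$, and the steps after, each worth at most $1$, then sum the two geometric series. The only substantive difference is the bookkeeping of when the reward at $s_n$ begins to accrue: your accounting yields the intermediate bound $\tfrac{1+\gamma^{n-i}}{2(1-\gamma)}$, so your final strict inequality genuinely requires $\gamma > \tfrac{1}{2}$ --- a hypothesis absent from the lemma's statement but present in Theorems~\ref{thm:lower},~\ref{thm:lower2}, and~\ref{thm:lower1}, which are the lemma's only uses --- whereas the paper starts the $s_n$ reward one index later, obtains $\tfrac{1+\gamma^{n-i}(2\gamma-1)}{2(1-\gamma)}$, and concludes without any condition on $\gamma$. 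Your version is arguably the more careful one: under the convention that the reward of the current state is counted at discount $\gamma^0$, the unconditional claim actually fails at the boundary (for $i=n$, $x=1$, $\gamma=\tfrac{1}{2}$ one has $V^*_M(s_n)=\tfrac{1}{1-\gamma}$, which equals the claimed bound rather than lying strictly below it), so explicitly flagging the reliance on $\gamma>\tfrac{1}{2}$ is the right call.
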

\begin{proof}
\begin{align*}
	V_M^*(s_i) =&\;
	\text{discounted reward before reaching state $n$} + \text{discounted reward from staying at state $n$} \\
	<&\; \left[\sum_{t=1}^{n-i-1} \frac{\gamma^t}{2}\right] + \frac{\gamma^{n-i+1}}
	{1-\gamma}= \left[\frac{1}{2}\left(\frac{1}{1-\gamma} - \frac{\gamma^{n-i}}
	{1-\gamma}\right)\right] + \frac{\gamma^{n-i+1}}{1-\gamma} = \frac{1-\gamma^{n-i}}{2(1-\gamma)} + \frac{\gamma^{n-i+1}}{1-\gamma} \\
	=&\; \frac{1+\gamma^{n-i}(2\gamma-1)}{2(1-\gamma)} < \frac{1+2\gamma^{n-i+1}}{2(1-\gamma)},
\end{align*}
via two applications of the summation formula for geometric series.
\end{proof}

\begin{proof}[Proof of Theorem~\ref{thm:lower}]
  We prove Theorem~\ref{thm:lower} for the special case of $k=2$
  first.  Consider coupling the run of a fair algorithm $\alg$ on both
  $M(0.5)$ and $M(1)$.  To achieve this, we can fix the randomness of
  $\alg$ up front, and use the same randomness on both MDPs.  The set
  of observations and hence the actions taken on both MDPs are
  identical until $\alg$ reaches state $s_n$.  Until then, with
  probability at least $1-\delta$, $\alg$ must play $L$ and $R$ with
  equal probability in order to satisfy fairness (since, for $M(0.5)$,
  the only fair policy is to play both actions with equal probability
  at each time step). We will upper-bound the optimality of uniform
  play and lower-bound the number of rounds before which $s_n$ is
  visited by uniformly random play.

  Let $f_\gamma = \lceil\fgamma\rceil$ and $\T = 2^{n-2f_\gamma}$ for
  $n \geq 100(f_\gamma)^2$. First observe that the probability of
  reaching a fixed state $s_i$ for any $i\geq n-f_\gamma$ from a
  random walk of length $\T$ is upper bounded by the probability that
  the random walk takes $i \geq n-f_\gamma$ consecutive steps to the
  right in the first $\T$ steps. This probability is at most
  $p = 2^{n-2f_\gamma}(\tfrac{1}{2})^{n-f_\gamma}=2^{-f_\gamma}$ for
  any fixed $i$. Since reaching any state $i > i'$ requires reaching
  state $i'$, the probability that the $\T$ step random walk arrives
  in any state $s_i$ for $i\geq n-f_\gamma$ is also upper bounded by
  $p$.
 
  Next, we observe that $V^*_M(s_i)$ is a nondecreasing function
  of $i$ for both MDPs. Then the average $V^*_M$ values of the visited
  states of \emph{any} fair policy can be broken into two pieces: the
  average conditioned on (the  probability at least $1-\delta$ event) that the
  algorithm plays uniformly at random before reaching state $s_n$
  \emph{and} never reaching a state beyond $s_{n-f_\gamma}$, and the
  average conditioned on (the probability at most $\delta$ event) that
  the algorithm does not make uniformly random choices \emph{or} the
  uniform random walk of length $\T$ reaches a state beyond
  $s_{n-f_\gamma}$. So, we have that
  \begin{align*}
  	\frac{1}{\T}\mathbb{E}\sum_{t=1}^\T V_M^*(s_t) &\leq
  	\left(1-p-\delta\right) V^*_M(s_{n-f_\gamma})+ \left(p+\delta\right)\frac{1}{1-\gamma}\\
	&\leq \left(1-p-\delta\right) \frac{1+2\gamma^{f_\gamma+1}}{2(1-\gamma)}+ \left(p+\delta\right)\frac{1}{1-\gamma}.
  \end{align*}

  The first inequality follows from the fact that
  $V^*_M(s_i)\leq \tfrac{1}{1-\gamma}$ for all $i$, and the second from
  Lemma~\ref{lem:v-star} along with $V^*_M$ values being nondecreasing
  in $i$.  Putting it all together,
 \begin{align*}
	\E_{s\sim \mu^*}V^*_M(s) - \frac{1}{\T}\mathbb{E}\sum_{t=1}^\T V_M^*(s_t) \geq&\; \frac{1}{1-\gamma}-\left[\left(1-p-\delta\right) \frac{1+2\gamma^{f_\gamma+1}}{2(1-\gamma)}+ \left(p+\delta\right)\frac{1}{1-\gamma}\right] \\
	=&\;  \frac{1-p-\delta}{1-\gamma}\left[1-\frac{1+2\gamma^{f_\gamma+1}}{2}\right].
\end{align*}
So $\epsilon$-optimality requires
\begin{equation}
\label{eq:opt}
 \frac{2\epsilon}{1-\gamma} \geq \frac{1-p-\delta}{1-\gamma}\left[1-\frac{1+2\gamma^{f_\gamma+1}}{2}\right].
\end{equation}
However, if $\epsilon < \tfrac{1}{8}$ we get 
\begin{align*}
	\frac{2\epsilon}{1-\gamma} < &\; \frac{1- 0.04 - 1/4}{1-\gamma}\left[1 - \frac{1 + 2\times e^{-3}}{2}\right] < \frac{1-2^{-f_\gamma} - \delta}{1-\gamma} \left[1 - \frac{1+2\gamma^{f_\gamma+1}}{2}\right],
\end{align*}
where the third inequality follows when $\delta < \tfrac{1}{4}$ and $\gamma > \frac{1}{2} $. This means $\epsilon < \tfrac{1}{8}$ makes $\epsilon$-optimality impossible, as desired.
 
 Throughout we considered the special case of $k=2$ and proved a lower
bound of $\Omega(2^n)$ time steps for any fair algorithm satisfying
the $\epsilon$-optimality condition.  However, it is easy to see that MDP
$M$ in Definition~\ref{def:mdp-lb} can be easily modified in a way
that $k-1$ of the actions from state $s_i$ reach state $s_1$ and only
one action in each state $s_i$ reaches states $s_{\min\{i+1, n\}}$.
Hence, a lower bound of $\Omega(k^n)$ time steps can be similarly
proved.
\end{proof}

\begin{proof}[Proof of Theorem~\ref{thm:lower2}]
  We mimic the argument used to prove Theorem~\ref{thm:lower} with the
  difference that, until visiting $s_n$, $\alg$ may not play $R$ with
  probability more than $\tfrac{1}{2} + \alpha$ (as opposed to $\tfrac{1}{2}$
  in Theorem~\ref{thm:lower}).
 Let $f_\gamma = \lceil\fgamma\rceil$ and $\T = (\tfrac{2}{1+2\alpha})^{n-2f_\gamma}$ for $n\geq 100 (f_\gamma)^2$. By a similar process
 as in Theorem~\ref{thm:lower}, the probability of reaching 
 state $s_i$ for any $i\geq n-f_\gamma$ from a random walk of length $\T$ is
 bounded by $p = (\tfrac{2}{1+2\alpha})^{-f_\gamma}$, and so the probability that
 the $\T$ steps random walk arrives in any state $s_i$ for $i\geq n-f_\gamma$
 is bounded by $p$. Carrying out the same process used to prove
 Theorem~\ref{thm:lower} then once more implies that $\epsilon$-optimality
 requires Equation~\ref{eq:opt} to hold
 when $\delta < \tfrac{1}{4}$, $\alpha < \tfrac{1}{4}$ and $\gamma > \tfrac{1}{2}$. Hence, $\epsilon < \tfrac{1}{8}$ violates this condition as desired.
 
 Finally, throughout we considered the special case of $k=2$. The same trick
 as in the proof of Theorem~\ref{thm:lower} can be used to prove the lower
 bound of $\Omega((\tfrac{k}{1+k\alpha})^n)$ time steps for any fair algorithm
 satisfying the $\epsilon$-optimality condition.
\end{proof}

\begin{proof}[Proof of Theorem~\ref{thm:lower1}]
 We also prove Theorem~\ref{thm:lower1} for the special case of $k=2$
 first, again considering the MDP in
  Definition~\ref{def:mdp-lb}.
We set the size of the state space in $M$ to be $n=\lceil \tfrac{\log(\tfrac{1}{2\alpha})}{1-\gamma} \rceil$.
  Then given the parameter ranges,
   for any $i$, $Q^*_M(s_i, R) - Q^*_M(s_i, L) > \alpha$ in M(1). 
  Therefore, any \afair fair algorithm should play actions R and L with 
  equal probability.

  Let $\T = 2^{cn}=\Omega((2^{1/(1-\gamma)})^c)$. First observe that the
  probability of reaching a fixed state $s_i$ for any $i\geq (c+1)n/2$
  from a random walk of length $\T$ is upper bounded by the
  probability that the random walk takes $i \geq (c+1)n/2$ consecutive
  steps to the right in the first $\T$ steps. This probability is at
  most $p = 2^{cn}2^{-(c+1)n/2} = 2^{(c-1)n/2}$ for any fixed
  $i$. Then the probability that the $\T$ steps random walk arrives in
  any state $s_i$ for $i \geq (c+1)n/2$ is also upper bounded by $p$.
 
  Next, we observe that $V^*_M(s_i)$ is a nondecreasing function
  of $i$, for both MDPs. Then the average $V^*_M$ values of the
  visited states of \emph{any} fair policy can be broken into two
  pieces: the average conditioned on the $1-\delta$ fairness
  \emph{and} never reaching a state beyond $s_{(c+1)n/2}$, and the
  average when fairness might be violated \emph{or} the uniform random
  walk of length $\T$ reaches a state beyond $s_{(c+1)n/2}$. So, we have
  that
  \begin{align*}
  	\frac{1}{\T}\mathbb{E}\sum_{t=1}^\T V_M^*(s_t) \leq&\;
  	\left(1-p-\delta\right) V^*_M(s_{(c+1)n/2}) + \left(p+\delta\right)\frac{1}{1-\gamma} \\
	\leq&\; \left(1-p-\delta\right) \frac{1+(2\gamma-1)\gamma^{\frac{(1-c)n}{2}}}{2(1-\gamma)} = \left(p+\delta\right)\frac{1}{1-\gamma}.
  \end{align*}

  The first inequality follows from the fact that
  $V^*_M(s_i)\leq \tfrac{1}{1-\gamma}$ for all $i$, and the second from (the line before the last in)
  Lemma~\ref{lem:v-star} along with $V^*_M$ values being nondecreasing
  in $i$.  Putting it all together,
 \begin{align*}
	\E_{s\sim \mu^*}&\;V^*_M(s)-\frac{1}{\T}\mathbb{E}\sum_{t=1}^\T V_M^*(s_t) \geq \frac{1}{1-\gamma} - \left(1-p-\delta\right) \frac{1+(2\gamma-1)
	\gamma^{\frac{(1-c)n}{2}}}{2(1-\gamma)} - \left(p+\delta\right)\frac{1}{1-\gamma} \\
	=&\;  \frac{1-p-\delta}{1-\gamma}\left[1-\frac{1+(2\gamma-1)\gamma^{\frac{(1-c)n}{2}}}{2}\right] =\frac{1-p-\delta}{1-\gamma}\left[\frac{1}{2}-\frac{(2\gamma-1)\gamma^{\frac{(1-c)n}{2}}}{2}\right] .
\end{align*}
So $\epsilon$-optimality requires
\[
	\frac{2\epsilon}{1-\gamma} \geq \frac{1-p-\delta}{1-\gamma}\left[\frac{1}{2}-\frac{(2\gamma-1)\gamma^{\frac{(1-c)n}{2}}}{2}\right].
\]
Rearranging and using $\delta < \tfrac{1}{4}$, we get that $\epsilon$-optimality requires
\[
	4\epsilon \geq \left[0.75 - 2^{\frac{(c-1)n}{2}}\right]\left[1 - (2\gamma-1)\gamma^{\frac{(1-c)n}{2}}\right]
\]
and expand $n$ to get
\begin{align*}
	\epsilon \geq&\; \frac{1}{4} \left[0.75 - 2^{\frac{(c-1)\log(\frac{1}{2\alpha})}{2(1-\gamma)}}\right] \times \left [1 - (2\gamma-1)\gamma^{\frac{(1-c)\log(\frac{1}{2\alpha})}{2(1-\gamma)}}\right]
	\equiv \frac{x y}{4}.
\end{align*}
Noting that $x$ is minimized when $2^{\tfrac{(c-1)\log(\frac{1}{2\alpha})}{2(1-\gamma)}}$ is maximized, and that this quantity is maximized when $\tfrac{\log(\frac{1}{2\alpha})}{2(1-\gamma)}$ is minimized (as $c-1$ is negative), we get that $\epsilon$-optimality requires
\[
	\epsilon \geq \frac{ \left[0.75 - 2^{\frac{c-1}{1-\gamma}}\right] y}{4}
\]
from $\alpha < \tfrac{1}{8}$. Similarly, $\alpha < \tfrac{1}{8}$ implies that $\epsilon$-optimality requires
\[
	\epsilon \geq \frac{\left[0.75 - 2^{\frac{c-1}{1-\gamma}}\right] \left [1 - (2\gamma -1)\gamma^{\frac{1-c}{1-\gamma}}\right]}{4}.
\]
Note that $0.75 - 2^{\frac{c-1}{1-\gamma}}$ is minimized when $\gamma$ is small, so $\gamma > c$ implies that $\epsilon$-optimality requires
\begin{align*}
	\epsilon \geq&\; \frac{ \left[0.75 - 2^{-1}\right] \left[1 - (2\gamma -1)\gamma^{\frac{1-c}{1-\gamma}}\right]}{4} \geq \frac{1}{16} \left[1 - (2\gamma -1)\gamma^{\frac{1-c}{2(1-\gamma)}}\right].
\end{align*}
Conversely, $1 - (2\gamma -1)\gamma^{\frac{1-c}{1-\gamma}}$ is minimized when $\gamma$ is large, so as 
\[
	\lim_{\gamma \to 1} \left(2\gamma -1\right)\gamma^{\frac{1-c}{1-\gamma}} = e^{c-1}
\]
we get that $\epsilon$-optimality requires
\[
	\epsilon \geq \frac{1}{16}\left(1 - e^{c-1}\right).
\]
 
  Finally, the same trick as in the proof of
  Theorem~\ref{thm:lower} can be used to prove the
  $\Omega((k^{1/(1-\gamma)})^c)$ lower bound for $k>2$ actions.
\end{proof}
\subsection{Omitted Proofs for Section~\ref{sec:upper-bound}}
\label{sec:missing-proofs-ecube}
\begin{proof}[Proof of Lemma~\ref{lem:ex2}]
We first show that either
\begin{itemize}
\item
there exists
an \emph{exploitation policy} $\pi$ in $\mknown$ such that
$$
\frac{1}{T}\max_{\bar\pi\in\Pi}\E\sum_{t=1}^{T} V^{\bar\pi}_{M}\left(\bar\pi^t(s), T\right)  - \frac{1}{T}\E\sum_{t=1}^{T} V^{\pi}_{\mknown}\left(\pi^t(s), T\right)  \leq \beta
$$
where the random variables $\pi^t(s)$ and $\bar\pi^t(s)$ denote the states
reached from $s$ after following $\pi$ and $\bar\pi$ for $t$ steps,
respectively, or
\item there exists an \emph{exploration policy} $\pi$ in $\mknown$
  such that the probability that a walk of $2T$ steps from $s$ following
  $\pi$ will terminate in $s_0$ exceeds $\tfrac{\beta}{T}$.
\end{itemize}

  Let $\pi$ be a policy in $M$ satisfying
  \[ \frac{1}{T}\E\sum_{t=1}^T V^\pi_M(\pi^t(s), T)
  =\frac{1}{T}\max_{\bar\pi\in\Pi}\E\sum_{t=1}^{T}
  V^{\pi'}_{M}(\bar\pi^t(s), T) := \tilde{V}.
\]
For any state $s'$, let $p(s')$ denote all the paths of length $T$ in $M$ that
start in $s'$, $q(s')$ denote all the paths of length $T$ in $M$ that start in
$s'$ such that all the states in every path of length $T$ in $q(s')$
are in $\known$ and $r(s')$ all the paths of length $T$ in $M$ that start in
$s'$ such that at least one state in every path of length $T$ in $r(s')$
is not in $\known$.  Suppose
$$ \frac{1}{T}\E\sum_{t=1}^{T} V^{\pi}_{\mknown}(\pi^t(s)) <
\tilde{V} - \beta.$$
Otherwise, $\pi$ already witnesses the claim.  We show that a walk
of $2T$ steps from $s$ following $\pi$ will terminate in $s_0$ with
probability of at least $\tfrac{\beta}{T}$.
First,
\begin{align*}
\E\sum_{t=1}^{T} V^{\pi}_M(\pi^t(s), T) =&\; E\sum_{t=1}^{T}
\sum_{p(\pi^t(s))} \Pr[p(\pi^t(s))] V_M(p(\pi^t(s))) \\
=&\; \E\sum_{t=1}^{T} \sum_{q(\pi^t(s))} \Pr[q(\pi^t(s))] V_M(q(\pi^t(s))) + \E\sum_{t=1}^{T} \sum_{r(\pi^t(s))} \Pr[r(\pi^t(s))] V_M(r(\pi^t(s)))
\end{align*}

since $p(\pi^t(s)) = q(\pi^t(s)) \cup r(\pi^t(s))$, which is a disjoint union.
Next,
\begin{align*}
\E\sum_{t=1}^{T}\sum_{q(\pi^t(s))} \Pr[q(\pi^t(s))] V_M(q(\pi^t(s))) = &\;\E\sum_{t=1}^{T}\sum_{q(\pi^t(s))} \Pr^{\pi}_{\mknown}[q(\pi^t(s))] V_{\mknown}(q(\pi^t(s))) \\
\leq&\; \E\sum_{t=1}^{T}V^{\pi}_{\mknown}(\pi^t(s), T),
\end{align*}

where the equality is due to Definition~\ref{def:exploit-induced-mdp} and the
definition of $q$, and the inequality follows because $V^{\pi}_{\mknown}
(\pi^t(s), T)$ is the sum over all the $T$-paths in $\mknown$, not just those
that avoid the absorbing state $s_0$. Therefore by our original assumption on
$\pi$,
\begin{align*}
\E&\;\sum_{t=1}^{T}\sum_{q(\pi^t(s))} \Pr[q(\pi^t(s))] V_M(q(\pi^t(s))) \leq  \E\sum_{t=1}^{T}V^{\pi}_{\mknown}(\pi^t(s), T) <T\tilde{V} - T\beta.
\end{align*}

This implies
\begin{align*}
\E\sum_{t=1}^{T}\sum_{r(\pi^t(s))} \Pr[r(\pi^t(s))] V_M(r(\pi^t(s))) =&\; \E\sum_{t=1}^{T} V^{\pi}_M(\pi^t(s), T) -\; \E\sum_{t=1}^T\sum_{q(\pi^t(s))} \Pr[q(\pi^t(s))] V_M(q(\pi^t(s))) \\
=&\; T\tilde{V} - \E\sum_{t=1}^{T}\sum_{q(\pi^t(s))}\Pr[q(\pi^t(s))] V_M(q(\pi^t(s))) \geq T\beta,
\end{align*}

where the last step is the result of applying the previous inequality.
However,
\begin{align*}
\E&\;\sum_{t=1}^{T}\sum_{r(\pi^t(s))} \Pr[r(\pi^t(s))] V_M(r(\pi^t(s))) \leq T \E\sum_{t=1}^{T}\sum_{r(\pi^t(s))} \Pr[r(\pi^t(s))],
\end{align*}

because it is immediate that $V_M(r(\pi^t(s)))\leq T$ for all $\pi^t(s)$.  So
$T\beta \leq T \E\sum_{t=1}^T\sum_{r(\pi^t(s))}
\Pr[r(\pi^t(s))]$. 
Finally, if we let $\Pr_{2T}^{\pi}$
denote the probability that a walk of $2T$
steps following $\pi$
terminates in $s_0$,
i.e. the probability that $\pi$
escapes to an unknown state within $2T$
steps, then for each $t
\in [T]$, $\E\sum_{r(\pi^t(s))} \leq T\Pr_{2T}^{\pi}$. It follows that
\[
T\beta \leq T^2\Pr_{2T}^{\pi}
\]
and rearranging yields $\Pr_{2T}^{\pi} \geq \tfrac{\beta}{T}$ as desired.

Next, 
note that the exploitation policy (if it exists) can be derived by computing the
optimal policy in $\mknown$.
Moreover, the exploration policy (if it exists) in the
exploitation MDP $\mknown$ can indeed be derived by computing the
optimal policy in the exploration MDP $\munknown$ as observed by~\cite{KearnsS02}. 
Finally, by Observation~\ref{obs:m-alpha-opt}, any optimal
policy in $\hmknown^\alpha$ ($\hmunknown^\alpha$) is an optimal policy
in $\hmknown$ ($\hmunknown$)
\end{proof}

To prove Lemma~\ref{lem:sim}, we need some 
useful background adapted from~\citet{KearnsS02}.
\begin{definition}[Definition 7, \citet{KearnsS02}]
\label{def:mdp-approx}
Let $M$ and $\hat{M}$ be two MDPs 
with the same set of states and actions. We say
$\hat{M}$ is a $\beta$-approximation of $M$ if
\begin{itemize}
\item For any state $s$,
\[
\bar{R}_M(s)-\beta\leq \bar{R}_{\hat M}(s) \leq \bar{R}_M(s) +\beta.
\]
\item For any states $s$ and $s'$ and action $a$,
\[
P_M(s, a, s')-\beta \leq P_{\hat M}(s, a, s')\leq P_M(s, a, s') + \beta.
\]
\end{itemize}
\end{definition}
 
 \begin{lemma}[Lemma 5,~\citet{KearnsS02}]
\label{lem:packnown}
Let $M$ be an MDP and $\known$ the set of known states of $M$. 
For any $s, s'\in \known$ and action $a\in A$, 
let $\hat P_M(s,a,s')$ denote the empirical probability transition
estimates obtained from the visits to $s$. 
Moreover, for any state $s\in\known$ let $\bar{\hat R}(s)$ denote the empirical
estimates of the average reward obtained from visits to s.
Then with probability at least $1 - \delta$,
\begin{equation*}
	|\hat P_M(s, a, s') - P_M(s, a, s')| = O\left(\frac{\min\{\epsilon, \alpha\}^2}{n^2 \He^4}\right),
\end{equation*}
and
\begin{equation*}
	|\bar{\hat R}_M(s) - \bar{R}_M(s)| = O\left(\frac{\min\{\epsilon, \alpha\}^2}{n^2 \He^4}\right).
\end{equation*}
\end{lemma}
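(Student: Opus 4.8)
The plan is to show that the definition of a known state (Definition~\ref{def:known}) guarantees, with high probability, enough i.i.d. samples at each known state to make the empirical reward and transition estimates concentrate at the stated scale via Hoeffding's inequality. The only genuine subtlety is translating the ``random trajectory'' notion of known into a per-action sample count, so I address that first.

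First I would reduce the trajectory count to a per-action sample count. By Definition~\ref{def:known}, a known state $s$ has had $m_Q = k\cdot\max\{m_1,m_2\}$ length-$\He$ random trajectories started from it, and the first action of each trajectory is drawn uniformly and independently from the $k$ actions. Thus the number of trajectories whose first action equals a fixed $a$ is distributed as $\mathrm{Binomial}(m_Q,1/k)$, with mean $\max\{m_1,m_2\}\ge m_2$. A Chernoff bound shows that, except with probability $\delta/(2nk)$, every action $a$ is chosen as the first action from every known state at least $m_2/2$ times; I union-bound this over the $n$ states and $k$ actions. By the Markov property, conditioned on action $a$ being chosen $N_a\ge m_2/2$ times from $s$, the observed successor states are i.i.d. draws from $P_M(s,a,\cdot)$, and the observed rewards are i.i.d. draws from $R_M(s)\in[0,1]$ (note the reward depends only on the state, so in fact every one of the $m_Q\ge m_2$ trajectory starts yields a reward sample at $s$).

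Next I would apply Hoeffding. For a fixed triple $(s,a,s')$ with $s,s'\in\known$, the estimate $\hat P_M(s,a,s')$ is the empirical mean of $N_a\ge m_2/2$ i.i.d. $\{0,1\}$ indicators with mean $P_M(s,a,s')$, so
\[
\left|\hat P_M(s,a,s') - P_M(s,a,s')\right| \le \sqrt{\frac{\log(2/\delta')}{2\cdot(m_2/2)}}
\]
with probability at least $1-\delta'$; I take $\delta' = \delta/(2n^2k)$ and union-bound over the at most $n^2k$ triples. Identically, $\bar{\hat R}_M(s)$ is the empirical mean of at least $m_2$ i.i.d. $[0,1]$ reward samples, giving the same bound with $m_2/2$ replaced by $m_2$, union-bounded over the $n$ states. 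Summing all failure probabilities (the Chernoff step plus the two Hoeffding families) yields total failure at most $\delta$.

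Finally I would substitute the explicit value $m_2 = O\!\left((n/\min\{\epsilon,\alpha\})^4\He^8\log(1/\delta)\right)$, which gives
\[
\sqrt{\frac{\log(2n^2k/\delta)}{m_2}} = \frac{\min\{\epsilon,\alpha\}^2}{n^2\He^4}\cdot\sqrt{\frac{\log(2n^2k/\delta)}{\Theta(\log(1/\delta))}} = O\!\left(\frac{\min\{\epsilon,\alpha\}^2}{n^2\He^4}\right),
\]
so that $m_2$ is calibrated precisely so the $\log(1/\delta)$ factor it carries cancels the concentration term's numerator, with the residual $\log(n^2k)$ absorbed into the $O(\cdot)$. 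The main obstacle is not any inequality but the bookkeeping in the first step: one must argue that the uniformly random exploratory trajectories adequately sample \emph{each} action (the balls-in-bins Chernoff argument) and then condition correctly to recover i.i.d. transition samples; once that is in place, the reward and transition bounds are routine Hoeffding applications and the constant in $m_2$ is chosen to make the algebra match the claimed rate.
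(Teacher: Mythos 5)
Your proof is essentially correct, but it takes a different route from the paper for a simple reason: the paper does not prove this lemma at all. It imports it wholesale as Lemma~5 of \citet{KearnsS02} (the ``Known State Lemma''), whose original proof assumes each action has been \emph{executed} from $s$ some $m_2$ times and then applies exactly the Chernoff/Hoeffding concentration you carry out. What your write-up adds, and what neither the paper nor the cited lemma spells out, is the translation step: under Definition~\ref{def:known} a state becomes known after $m_Q = k\cdot\max\{m_1,m_2\}$ uniformly random length-$\He$ trajectories, not after per-action execution counts, so your binomial/Chernoff argument showing every action is the first action of at least $m_2/2$ trajectories (except with probability $\delta/(2nk)$) is genuinely needed to bridge the paper's definition of known to the hypothesis of the cited lemma --- this is the ``adapt the analysis of \ecube'' step the paper waves at in Section~\ref{sec:known}. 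Your conditioning argument (action choices are independent of transition outcomes, so conditioned on $N_a$ the successors are i.i.d.\ draws from $P_M(s,a,\cdot)$) is also the right way to recover i.i.d.\ samples. One small caveat: your internal union bound over all $n^2k$ triples leaves a residual factor $\sqrt{\log(n^2k/\delta)/\log(1/\delta)}$, which is not $O(1)$ when $\delta$ is a constant, so strictly speaking it is not absorbed by the stated $O\left(\min\{\epsilon,\alpha\}^2/(n^2\He^4)\right)$. The paper sidesteps this by stating the lemma per state-action with failure probability $\delta$ and performing the union bounds externally in the failure analysis of Theorem~\ref{thm:fair-rl} (setting $\delta' = \delta/(4n)$ and similar), where the extra logarithm is folded into $m_2$ via its $\log(1/\delta')$ dependence; your version is equivalent in substance provided you instantiate $m_2$ with the rescaled $\delta'$ rather than claiming the $\log(n^2k)$ vanishes into the $O(\cdot)$.
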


Lemma~\ref{lem:packnown} shows that $\hmknown$ and $\hmunknown$
are $O(\tfrac{\min\{\epsilon, \alpha\}^2}{n^2 \He^4})$-approximation MDPs for $\mknown$ and $\munknown$, respectively.

\begin{lemma}[Lemma 4,~\citet{KearnsS02}]
\label{lem:sim3}
Let $M$ be an MDP and $\hat{M}$ its $O(\tfrac{\min\{\epsilon, \alpha\}^2}{n^2 \He^4})$-approximation.
Then for any policy $\pi \in \Pi$ and any state $s$ and action $a$
\begin{equation*}
	V_{M}^{\pi}(s) - \min\{\epsilon, \alpha\} \leq V_{\hat{M}}^{\pi}(s) \leq V_{M}^{\pi}(s) + \min\{\epsilon, \frac{\alpha}{4}\},
\end{equation*}
and 
\begin{align*}
Q^{\pi}_{M}(s, a)  - \min\{\frac{\alpha}{4}, \epsilon\} \leq&\;  Q^{\pi}_{\hat{M}}(s, a) \leq Q^{\pi}_{M}(s, a) + \min\{\frac{\alpha}{4}, \epsilon\}.
\end{align*}
\end{lemma}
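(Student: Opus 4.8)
The plan is to prove the two-sided value bound by a Bellman-equation contraction argument and then to read off the $Q$-bound as an immediate consequence. Fix the policy $\pi$; it may be randomized, but this causes no difficulty. Writing the discounted Bellman equation for the value of $\pi$ in each MDP,
\[
V^\pi_M(s) = \bar R_M(s) + \gamma \sum_a \pi(s,a)\sum_{s'} P_M(s,a,s') V^\pi_M(s'),
\]
and subtracting the analogous equation for $\hat M$, I would decompose the transition term with the standard add-and-subtract trick $P_M V^\pi_M - P_{\hat M} V^\pi_{\hat M} = P_M(V^\pi_M - V^\pi_{\hat M}) + (P_M - P_{\hat M}) V^\pi_{\hat M}$, which separates the propagation of the value error from the one-step transition error.

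Next I would set $\Delta = \max_s |V^\pi_M(s) - V^\pi_{\hat M}(s)|$ and bound each piece. The reward term contributes at most $\beta$ by Definition~\ref{def:mdp-approx}. The first transition piece is a $P_M$-average of the per-state gaps, hence at most $\Delta$. The second is controlled by observing that $\sum_{s'}|P_M(s,a,s') - P_{\hat M}(s,a,s')| \leq n\beta$ (there are $n$ successors, each off by at most $\beta$), while every value satisfies $V^\pi_{\hat M}(s') \leq 1/(1-\gamma)$ because rewards lie in $[0,1]$; so this piece is at most $n\beta/(1-\gamma)$. Taking the maximum over $s$ yields the self-bounding inequality $\Delta \leq \beta + \gamma\Delta + \gamma n\beta/(1-\gamma)$, and solving gives $\Delta \leq \beta/(1-\gamma) + n\beta/(1-\gamma)^2$.

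It then remains to check that this is at most $\min\{\epsilon,\alpha\}$ for the $\beta = O(\min\{\epsilon,\alpha\}^2/(n^2\He^4))$ supplied by Lemma~\ref{lem:packnown}. Here I would use the elementary estimate that for $\gamma>1/2$ with $\epsilon(1-\gamma)$ small, $1/(1-\gamma) = O(\He)$: indeed $-\log\gamma \leq 2(1-\gamma)$ for such $\gamma$, whence $\He \geq \tfrac{1}{2(1-\gamma)}$. Substituting $\beta$ and $1/(1-\gamma)=O(\He)$ into $\beta/(1-\gamma) + n\beta/(1-\gamma)^2$, the dominant second term is $O(n\beta\He^2) = O(\min\{\epsilon,\alpha\}^2/(n\He^2))$, comfortably below $\min\{\epsilon,\alpha\}$. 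The slightly sharper upper constant $\min\{\alpha/4,\epsilon\}$ in the statement is obtained by rerunning the identical computation with the constant in $\beta$ a factor $16$ smaller, which is already absorbed in the $O(\cdot)$. The $Q$-bound then follows with no new work: since $Q^\pi_M(s,a) = \bar R_M(s) + \gamma\sum_{s'}P_M(s,a,s')V^\pi_M(s')$ differs from its $\hat M$ analogue through exactly the same reward and transition terms (now without the outer $\pi$-average over $a$), the same decomposition bounds $|Q^\pi_M(s,a) - Q^\pi_{\hat M}(s,a)|$ by $\beta + \gamma\Delta + \gamma n\beta/(1-\gamma)$, again at most $\min\{\alpha/4,\epsilon\}$.

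The main obstacle is the transition-error propagation: one must correctly track that the per-step $\ell_1$ transition error is $n\beta$, that it is amplified by $\|V\|_\infty = O(1/(1-\gamma))$, and that geometric accumulation over the effective horizon supplies the extra $1/(1-\gamma)$ factor, producing the $1/(1-\gamma)^2$ scaling; the $\He^4$ slack built into $\beta$ is precisely what absorbs this, so the crucial sanity check is the relation $1/(1-\gamma) = O(\He)$. As an alternative one could instead truncate at the horizon $\He$ using the horizon lemma (Lemma~2 of~\citet{KearnsS02}) and bound the total variation between the two length-$\He$ path distributions by $\He\, n\beta$ via a hybrid argument, but the contraction route above is cleaner and avoids invoking the horizon lemma at all.
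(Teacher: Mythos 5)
Your proof is correct, but there is nothing in the paper to compare it against: the paper does not prove this lemma, importing it verbatim (as ``Lemma 4'') from \citet{KearnsS02}, so the relevant comparison is with the original Kearns--Singh proof of their Simulation Lemma. That original proof follows exactly the route you relegate to your final sentence: truncate both value functions at the horizon time $\He$ (their Lemma 2) and compare the two distributions over length-$\He$ paths by a step-by-step hybrid argument, letting the per-step reward and transition errors accumulate over the horizon. Your contraction route is genuinely different and, as you say, cleaner: subtracting the two Bellman fixed-point equations and splitting $P_M V^\pi_M - P_{\hat M} V^\pi_{\hat M} = P_M(V^\pi_M - V^\pi_{\hat M}) + (P_M - P_{\hat M})V^\pi_{\hat M}$ yields the self-bounding inequality $\Delta \leq \beta + \gamma\Delta + \gamma n\beta/(1-\gamma)$ and hence $\Delta \leq \beta/(1-\gamma) + n\beta/(1-\gamma)^2$ with no horizon truncation and no path-space coupling, and it delivers the $Q$-bound for free since in this paper rewards depend only on the state. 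The two bookkeeping points you flag are precisely the delicate ones, and you handle both correctly: the relation $1/(1-\gamma) = O(\He)$ holds once $\gamma > 1/2$ and $\epsilon(1-\gamma) \leq 1/e$ (and is vacuous otherwise, since then $1/(1-\gamma) \leq 2$), giving $n\beta/(1-\gamma)^2 = O(n\beta\He^2) = O\left(\min\{\epsilon,\alpha\}^2/(n\He^2)\right)$, comfortably below $\min\{\epsilon,\alpha\}$; and the asymmetric constants in the statement ($\min\{\epsilon,\alpha\}$ on one side, $\min\{\epsilon,\alpha/4\}$ and $\min\{\alpha/4,\epsilon\}$ elsewhere) are indeed an artifact of constant bookkeeping inherited from \citet{KearnsS02} and are absorbed into the unspecified constant in the $O(\cdot)$ defining $\beta$, exactly as you observe. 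What your route buys is a short, self-contained derivation usable as a drop-in replacement for the citation; what the original truncation route buys is uniformity with the rest of the \ecube machinery, which reasons throughout in terms of $T$-step returns and path probabilities (as in the paper's proof of Lemma~\ref{lem:ex2}).
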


\begin{proof}[Proof of Lemma~\ref{lem:sim}]
  By Definition~\ref{def:known}~and~Lemma \ref{lem:packnown},   
  $\hmknown$ is a $O(\tfrac{\min\{\epsilon, \alpha\}^2}{n^2 \He^4})$-approximation of $\mknown$. 
  Then the statement directly follows by applying
  Lemma~\ref{lem:sim3}.
\end{proof}

\begin{proof}[Rest of the Proof of Theorem~\ref{thm:fair-rl}]
The only remaining part of the proof of Theorem~\ref{thm:fair-rl} is the analysis of the probability of failure of \ecubep.
To do so, we break down the probability of failure of \ecubep
by considering the following (exhaustive) list of possible failures:
\begin{enumerate}
\item At some known state the algorithm has a poor approximation of
  the next step, causing $\hmknown$ to not be a $O(\tfrac{\min\{\epsilon, \alpha\}^2}{n^2 \He^4})$-approximation of $\mknown$.
\item At some known state the algorithm has a poor approximation of
  the $Q^*_M$ values for one of the actions.
\item Following the exploration policy for $2\Te$ steps fails to yield enough
  visits to unknown states.
\item At some known state, the approximation value of that state in
  $\hmknown$ is not an accurate estimate for the value of the
  state in $\mknown$.
\end{enumerate}

We allocate $\tfrac{\delta}{4}$ of our total probability of failure to each of these sources:

\begin{enumerate}
\item Set $\delta' = \tfrac{\delta}{4n}$ in Lemma~\ref{lem:sim}.
\item Set $\delta' = \tfrac{\delta}{4nk}$ in Theorem~\ref{thm:kmn99}.
\item By Lemma~\ref{lem:ex2}, each attempted exploration is a Bernoulli trial with 
probability of success of at least $\tfrac{\epsilon}{4 \Te}$. In the worst case
we might need to make every state known before exploiting, 
leading to the $n m_Q$ trajectories ($m_Q$ as Equation~\ref{eq:Q-known} in Definition~\ref{def:known}) of length $\He$.
Therefore, the probability of taking fewer than $n m_Q$ trajectories of length $\He$ would be bounded by $\tfrac{\delta}{4}$ if the
number of $2\Te$ steps explorations is at least
\begin{equation}
\label{eq:m-exp}
m_\text{exp}= O\left(\frac{\Te n m_Q}{\epsilon} \log\left(\frac{n}{\delta}\right)\right).
\end{equation}
\item Set $\delta' = \tfrac{\delta}{4m_{\text{exp}}}$ ($m_{\text{exp}}$ as defined in 
Equation~\ref{eq:m-exp}) in Lemma~\ref{lem:sim}, as \ecubep~might
make $2\Te$ steps explorations up to $m_\text{exp}$ times.
\end{enumerate}
\end{proof}

\subsection{Relaxing Assumption~\ref{assumpt:known}}
\label{sec:relax}
%
Throughout Sections~\ref{sec:fair-planning} and \ref{sec:analysis} we
assumed that $\Te$, the $\epsilon$-mixing time of the optimal policy
$\pi^*$, was known (see Assumption~\ref{assumpt:known}).  Although
\ecubep~uses the knowledge of $\Te$ to decide whether to follow the
exploration or exploitation policy, Lemma~\ref{lem:ex2} continues to
hold even without this assumption.  Note that \ecubep~is parameterized
by $\Te$ and for any input $\Te$ runs in time $\poly(\Te)$. Thus if
$\Te$ is unknown, we can simply run \ecubep for $\Te=1, 2, \ldots$
sequentially and the running time and sample complexity will still be
$\poly(\Te)$. Similar to the analysis of \ecubep when $\Te$ is known
we have to run the new algorithm for sufficiently many steps so that
the possibly low $V^*_M$ values of the visited states in the early
stages are dominated by the near-optimal $V^*_M$ values of the visited
states for large enough guessed values of $\Te$.
%

\section{Observations on Optimality and Fairness}\label{sec:fair-obs}

\begin{obs}
\label{obs:exact-opt}
For any MDP $M$, there exists an optimal policy $\pi^*$ such that
$\pi^*$ is fair.
\end{obs}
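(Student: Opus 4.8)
The plan is to exhibit one explicit stationary policy and verify directly that it is both optimal and fair. Define $\pi^*$ to play, from each state $s$, the uniform distribution over the set of optimal actions $A^*(s) := \argmax_{a} Q^*_M(s,a)$, assigning probability $0$ to every action outside $A^*(s)$. Since $\pi^*$ is stationary, its action distribution does not depend on the observed history $\hist_{t-1}$, so the ``with probability at least $1-\delta$ over histories'' clause of Definition~\ref{def:fair-ex} is satisfied trivially: the required inequality either holds for every history or for none, and here it will hold for all.

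First I would dispatch optimality. It is a standard fact (the Bellman optimality characterization) that a stationary policy attains $V^*_M$ in every state precisely when, in each state $s$, its support is contained in $A^*(s)$, i.e. it is greedy with respect to $Q^*_M$. As $\pi^*$ is supported exactly on $A^*(s)$, it is optimal. I expect this step to be routine and would simply cite the standard characterization rather than re-derive it.

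The crux is fairness, and the observation driving the whole construction is that fairness is a \emph{two-sided} constraint on equal-quality actions: if $Q^*_M(s,a) = Q^*_M(s,a')$, then both $Q^*_M(s,a) \geq Q^*_M(s,a')$ and $Q^*_M(s,a') \geq Q^*_M(s,a)$ hold, so Definition~\ref{def:fair-ex} forces $\pi^*(s,a) = \pi^*(s,a')$. This is exactly why a deterministic optimal policy is in general \emph{not} fair: when several actions tie for optimal, a deterministic choice puts probability $1$ on one and $0$ on the others, violating fairness; randomizing uniformly over $A^*(s)$ is precisely the fix. I would then finish with a short case analysis over pairs $(a,a')$ satisfying $Q^*_M(s,a) \geq Q^*_M(s,a')$. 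If both lie in $A^*(s)$, the two probabilities are equal. If $a \in A^*(s)$ and $a' \notin A^*(s)$, then $\pi^*(s,a) = 1/|A^*(s)| > 0 = \pi^*(s,a')$. If neither is optimal, both probabilities are $0$. In every case $\pi^*(s,a) \geq \pi^*(s,a')$, so $\pi^*$ is fair.

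The only genuine obstacle is conceptual rather than computational: recognizing that the obvious candidate -- a deterministic optimal policy, which always exists -- fails under ties among optimal actions, so the substance of the proof lies in choosing the uniform randomization over $A^*(s)$ and in checking the equal-quality (tie) case with the correct two-sided reading of the fairness implication.
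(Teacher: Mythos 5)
Your proposal is correct and matches the paper's own proof: both construct the same policy that plays uniformly at random over the set $A^*(s)$ of actions maximizing $Q^*_M(s,\cdot)$ in each state, and conclude it is simultaneously optimal and fair. Your write-up simply carries out explicitly the case analysis (tied optimal actions, one optimal and one suboptimal, both suboptimal) that the paper leaves as an assertion.
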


\begin{proof}
  In time $t$, let state $s_t$ denote the state from which $\pi$
  chooses an action.  Let $a^* = \argmax_a Q_M^*(s_t, a)$ and
  $A^*(s_t) = \{a \in A \mid Q_M^*(s_t, a) = Q_M^*(s_t, a^*)\}$.  The
  policy of playing an action uniformly at random from $A^*(s_t)$ in
  state $s_t$ for all $t$, is fair and optimal.
\end{proof}

Approximate-action fairness, conversely, can be satisfied by \emph{any}
optimal policy, even a deterministic one.

\begin{obs}
\label{obs:approx-fair}
Let $\pi^*$ be an optimal policy in MDP $M$. Then $\pi^*$ is
\afair fair.
\end{obs}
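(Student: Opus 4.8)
The plan is to show that the defining implication of $\alpha$-action fairness (Definition~\ref{def:fair-approx}) holds vacuously-to-nontrivially for any optimal policy $\pi^*$. Recall that $\alpha$-action fairness requires that whenever $Q^*_M(s,a) > Q^*_M(s,a') + \alpha$, we have $\pi^*(s,a,\hist_{t-1}) \geq \pi^*(s,a',\hist_{t-1})$. The key structural fact I would exploit is that an optimal policy, by definition, only places positive probability on actions that are $Q^*_M$-maximizing in each state. That is, for any optimal policy $\pi^*$ and any state $s$, if $\pi^*(s,a') > 0$ then $a'$ must satisfy $Q^*_M(s,a') = \max_{a''} Q^*_M(s,a'')$.

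First I would fix an arbitrary state $s$, round $t$, history $\hist_{t-1}$, and pair of actions $a,a'$ satisfying the premise $Q^*_M(s,a) > Q^*_M(s,a') + \alpha$. Since $\alpha > 0$, this premise immediately gives $Q^*_M(s,a') < Q^*_M(s,a) \leq \max_{a''} Q^*_M(s,a'')$, so $a'$ is \emph{not} an optimal action at $s$. Therefore any optimal policy must assign it zero probability: $\pi^*(s,a',\hist_{t-1}) = 0$. Since probabilities are nonnegative, $\pi^*(s,a,\hist_{t-1}) \geq 0 = \pi^*(s,a',\hist_{t-1})$, which is exactly the required conclusion. Because this argument holds deterministically for every history, the ``with probability at least $1-\delta$'' clause is trivially satisfied.

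The argument is essentially immediate once one observes that the $\alpha$-gap in the premise is enough to certify that $a'$ is suboptimal (here even the weaker strict inequality $Q^*_M(s,a) > Q^*_M(s,a')$ would suffice, since the maximum is attained by $a$ at least as well as by any action strictly below $a$). The only point requiring care is the claim that an optimal policy never places weight on a strictly suboptimal action; this is the standard characterization of optimal policies in an MDP and can be taken as given from the definition of $\pi^*$ achieving the $V^*_M$ values. Thus there is no real obstacle here beyond invoking this characterization — the contrast with exact fairness (Observation~\ref{obs:exact-opt}) is precisely that exact fairness can force a \emph{non-optimal} randomization, whereas the $\alpha$-slack lets any optimal policy, deterministic or not, satisfy the constraint automatically.
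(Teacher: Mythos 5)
Your proof is correct and takes essentially the same approach as the paper's: both arguments reduce to the fact that an optimal policy never places weight on an action whose $Q^*_M$ value is strictly below the maximum, so the premise $Q^*_M(s,a) > Q^*_M(s,a') + \alpha$ can only be triggered by an action $a'$ to which $\pi^*$ assigns zero probability. The only difference is presentational and slightly in your favor: the paper argues by contradiction and explicitly assumes $\pi^*$ is deterministic, whereas your direct contrapositive argument also covers randomized optimal policies, which is the stronger form the main text actually invokes (``any optimal policy (deterministic or randomized) is \afair fair'').
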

\begin{proof}
  Assume that $\pi^*$ is not \afair fair.  Given state $s$, the
  action that $\pi^*$ takes from $s$ is uniquely determined since
  $\pi^*$ is deterministic we may denote it by $a^*$.  Then there
  exists a time step in which $\pi^*$ is in state $s$ and chooses
  action $a^*(s)$ such that there exists another action $a$ with
\[
	Q^*_M(s, a) > Q^*_M(s, a^*(s)) + \alpha,
\] 
a contradiction of the optimality of $\pi^*$. 
\end{proof}

Observations~\ref{obs:exact-opt}~and~\ref{obs:approx-fair} state that
policies with optimal performance are fair; we now state that playing
an action uniformly at random is also fair. 
\begin{obs}
\label{obs:random-walk}
An algorithm that, in every state, plays each action uniformly at
random (regardless of the history) is fair.
\end{obs}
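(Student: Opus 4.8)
The plan is to verify the defining implication of fairness (Definition~\ref{def:fair-ex}) directly, exploiting the fact that the uniform algorithm places equal probability on every action and ignores the history entirely. Let $\alg$ denote the algorithm that in every state plays each action uniformly at random, so that for every state $s$, every action $a$, every round $t$, and every history $\hist_{t-1}$ we have $\alg(s, a, \hist_{t-1}) = 1/k$, where $k = |\A_M|$ is the number of actions in $M$.

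First I would fix an arbitrary MDP $M$, round $t$, state $s$, and pair of actions $a, a'$. Since the uniform policy is history-independent, $\alg(s, a, \hist_{t-1}) = 1/k = \alg(s, a', \hist_{t-1})$ for every realization of $\hist_{t-1}$; in particular the inequality $\alg(s, a, \hist_{t-1}) \geq \alg(s, a', \hist_{t-1})$ holds with equality, deterministically, and hence with probability $1 \geq 1-\delta$ over histories. Because the conclusion of the fairness implication already holds unconditionally, the implication $Q^*_M(s,a) \geq Q^*_M(s,a') \Rightarrow \alg(s, a, \hist_{t-1}) \geq \alg(s, a', \hist_{t-1})$ is satisfied regardless of whether its premise is true. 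Quantifying over all $M$, all rounds $t$, all states $s$, and all action pairs $a, a'$ then shows that $\alg$ is fair.

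There is essentially no real obstacle here: the statement is immediate once one unwinds the definition. The only point worth flagging is that fairness is a high-probability (over histories) guarantee, so one must confirm that the uniform policy meets this probabilistic requirement—which it does trivially, since its action distribution is the same $1/k$ for every history, making the required inequality hold identically with probability $1$.
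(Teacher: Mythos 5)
Your proof is correct and follows essentially the same route as the paper's: both arguments simply note that the uniform policy assigns equal probability to every action regardless of the history, so the fairness inequality holds with equality, deterministically, making the implication in Definition~\ref{def:fair-ex} vacuously satisfied with probability $1$. The only cosmetic difference is that the paper's proof also remarks that the same equality yields \afair fairness, which your statement did not need to address.
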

\begin{proof}
  Let $\alg$ denote an algorithm that in every state plays uniformly
  at random between all available actions. Then
  $\alg(s, \hist_{t-1})_a = \alg(s, \hist_{t-1})_{a'}$ regardless of
  state $s$, (available) action $a$, or history $h_{t-1}$.
  $Q^*_M(s,a) > Q^*_M(s,a') + \alpha \Rightarrow \mathcal{L}(s,
  \hist_{t-1})_a \geq \mathcal{L}(s, \hist_{t-1})_{a'}$
  then follows immediately, which guarantees both fairness and
  \afair fairness.
\end{proof}

\begin{obs}
\label{obs:m-alphaa}
Let $M$ be an MDP and $M^\alpha$ the $\alpha$-restricted MDP of
$M$. Let $\pi$ be a policy in $M^\alpha$. Then $\pi$ is $\alpha$-action fair.
\end{obs}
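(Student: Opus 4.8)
The plan is to show that the hypothesis of \afair fairness, namely $Q^*_M(s,a) > Q^*_M(s,a') + \alpha$, forces the action $a'$ to be \emph{unavailable} in $M^\alpha$; then any policy $\pi$ defined on $M^\alpha$ must assign $a'$ probability zero in state $s$, so the inequality $\pi(s, a, \hist_{t-1}) \geq \pi(s, a', \hist_{t-1})$ required by Definition~\ref{def:fair-approx} holds trivially because its right-hand side is $0$. The argument is essentially definitional, so the work is in correctly matching the strict threshold in the fairness hypothesis against the (non-strict) membership condition defining the restricted action set.

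First I would fix an arbitrary state $s$, actions $a, a'$, round $t$ and history $\hist_{t-1}$, and assume the antecedent $Q^*_M(s,a) > Q^*_M(s,a') + \alpha$. Since $Q^*_M(s,a) \leq \max_{a''\in\mathcal{A}_M} Q^*_M(s,a'')$, this assumption yields
\[
Q^*_M(s,a') < Q^*_M(s,a) - \alpha \leq \max_{a''\in\mathcal{A}_M} Q^*_M(s,a'') - \alpha.
\]
By the definition of the $\alpha$-restricted MDP (Definition~\ref{def:fair-mdp}), the available actions at $s$ are exactly those $b$ with $Q^*_M(s,b) \geq \max_{a''\in\mathcal{A}_M} Q^*_M(s,a'') - \alpha$; the strict inequality just derived shows $a'$ violates this condition, so $a'$ is not an available action of $M^\alpha$ at $s$.

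Next, because $\pi$ is a policy in $M^\alpha$, it places probability mass only on actions available in $M^\alpha$, and hence $\pi(s, a', \hist_{t-1}) = 0$. Therefore $\pi(s, a, \hist_{t-1}) \geq 0 = \pi(s, a', \hist_{t-1})$, which establishes the implication. Note that this holds for \emph{every} history rather than merely with high probability, so the probability-$(1-\delta)$ requirement in Definition~\ref{def:fair-approx} is met with room to spare (the event holds with probability $1$), and the conclusion holds for all input $\delta, \alpha > 0$.

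The only point requiring care — more a subtlety than an obstacle — is the interplay between the strict gap ``$> \alpha$'' in the fairness hypothesis and the non-strict threshold ``$\geq \max - \alpha$'' in the restriction: I would confirm that a strict deficit exceeding $\alpha$ indeed pushes $Q^*_M(s,a')$ strictly below $\max_{a''}Q^*_M(s,a'') - \alpha$, which is exactly the chain of inequalities displayed above. Once that is verified, the observation follows immediately.
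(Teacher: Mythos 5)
Your proof is correct and uses the same key idea as the paper: a gap of more than $\alpha$ in $Q^*_M$ forces $a'$ outside the restricted action set of $M^\alpha$, so any policy in $M^\alpha$ assigns it probability zero. The paper phrases this as a proof by contradiction (positive mass on $a'$ would force $a'\in M^\alpha$), while you argue the contrapositive directly, but the substance is identical and your handling of the strict-versus-non-strict threshold is exactly right.
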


\begin{proof}
  Assume $\pi$ is not $\alpha$-action fair. Then there must exist round $t$,
  state $s$, and action $a$ such that
  $Q^*_M(s,a) > Q^*_M(s,a') + \alpha$ and
  $\mathcal{L}(s, \hist_{t-1})_a < \mathcal{L}(s,
  \hist_{t-1})_{a'}$.
  Therefore $\mathcal{L}(s, \hist_{t-1})_{a'} > 0$, so $M^{\alpha}$
  must include action $a'$ from state $s$. But this is a
  contradiction, as in state $s$ $M^{\alpha}$ only includes actions
  $a'$ such that $Q^*_M(s,a') + \alpha \geq Q^*_M(s,a)$. $\pi$ is
  therefore $\alpha$-action fair.
\end{proof}

\begin{obs}
\label{obs:m-alpha-opt}
Let $M$ be an MDP and $M^\alpha$ the $\alpha$-restricted MDP of
$M$. Let $\pi^*$ be an optimal policy in $M^\alpha$. Then $\pi^*$ is
also optimal in $M$.
\end{obs}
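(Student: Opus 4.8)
The plan is to prove the claim by showing that the $\alpha$-restriction does not change the optimal value function, i.e. $V^*_{M^\alpha}(s) = V^*_M(s)$ for every state $s$, and then to observe that any policy attaining the optimal value of $M^\alpha$ must therefore attain $V^*_M$ and so be optimal in $M$. The structural fact driving everything is that $M^\alpha$ and $M$ share the same states, transition kernel $P_M$, and reward function $R_M$ (Definition~\ref{def:fair-mdp}); only the per-state action sets shrink, and they shrink to \emph{subsets} of the original action sets. Hence the value $V^\pi$ of any fixed policy $\pi$ is the very same number whether computed in $M$ or in $M^\alpha$, and the only effect of the restriction is on which policies are available.

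First I would dispatch the easy inequality $V^*_{M^\alpha}(s) \le V^*_M(s)$: every policy legal in $M^\alpha$ is also legal in $M$ (its action distributions are supported on actions present in $M$), so the optimum taken over the smaller policy class cannot exceed the optimum over the larger class. For the reverse inequality I would use the standard fact that $M$ admits a deterministic optimal policy choosing, in each state $s$, an action $a$ with $Q^*_M(s,a) = \max_{a'} Q^*_M(s,a') = V^*_M(s)$. By the definition of the restriction, this maximizing action satisfies $Q^*_M(s,a) \ge \max_{a'} Q^*_M(s,a') - \alpha$ and hence survives into $M^\alpha$; so this deterministic optimal policy is itself legal in $M^\alpha$ and attains $V^*_M$, giving $V^*_{M^\alpha}(s) \ge V^*_M(s)$.

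Combining the two bounds yields $V^*_{M^\alpha} = V^*_M$ at every state. Since $\pi^*$ is optimal in $M^\alpha$ it attains $V^{\pi^*}_{M^\alpha} = V^*_{M^\alpha} = V^*_M$, and because $\pi^*$'s value is identical in the two MDPs we conclude $V^{\pi^*}_M = V^*_M$, i.e. $\pi^*$ is optimal in $M$, as claimed.

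The step I expect to require the most care is the reverse inequality --- precisely the claim that the action selected by the unrestricted optimal policy is never deleted by the $\alpha$-restriction. This is exactly where the form of Definition~\ref{def:fair-mdp} is essential: the restriction retains every action within $\alpha$ of the best, in particular the best action itself, so no state is ever stripped of its optimal choice. An equivalent route I would keep in reserve, should a cleaner argument be preferred, is to verify directly that $V^*_M$ solves the Bellman optimality equations of $M^\alpha$ --- the restricted maximum $\max_{a \in \A_{M^\alpha}(s)} \left[ \bar R_M(s) + \gamma \sum_{s'} P_M(s,a,s') V^*_M(s') \right]$ coincides with the unrestricted maximum because the maximizer lies in $\A_{M^\alpha}(s)$ --- and then to invoke uniqueness of the fixed point of the $\gamma$-contractive Bellman operator to conclude $V^*_{M^\alpha} = V^*_M$.
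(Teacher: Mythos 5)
Your proof is correct, but it takes a genuinely different route from the paper's. The paper argues by contradiction through a policy-improvement characterization of optimality: if $\pi^*$ were not optimal in $M$, there would exist a state $s$ and action $a$ with $Q^*_M(s,a) > \E_{a^*(s)\sim\pi^*(s)} Q^*_M(s,a^*(s))$, and since that action is available in $M^\alpha$, the optimality of $\pi^*$ in $M^\alpha$ is contradicted. You instead prove the stronger, quantitative statement $V^*_{M^\alpha} = V^*_M$ by a two-sided sandwich: the restriction only shrinks the policy class (giving $V^*_{M^\alpha} \le V^*_M$), while the deterministic argmax policy of $M$ survives the restriction because a maximizing action is trivially within $\alpha$ of itself (giving $V^*_{M^\alpha} \ge V^*_M$); optimality of $\pi^*$ in $M$ then follows because a fixed policy has the same value in both MDPs. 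Both arguments pivot on the identical key fact --- the $\alpha$-restriction of Definition~\ref{def:fair-mdp} never deletes a maximizing action --- but your route buys some rigor the paper elides: it makes explicit that $V^\pi_{M^\alpha} = V^\pi_M$ for any policy legal in $M^\alpha$ (the paper uses this silently when it treats $Q^*_M$-based improvement as contradicting optimality \emph{in $M^\alpha$}), and it sidesteps the unproved claim, as literally written in the paper, that the improving action $a$ itself is available in $M^\alpha$ --- which is only guaranteed once $a$ is replaced by the argmax action, a substitution your argument performs explicitly. Your reserve Bellman fixed-point argument is also sound and is essentially a repackaging of the same observation, so either of your two routes would serve as a complete proof.
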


\begin{proof}
  If $\pi^*$ is not optimal in $M$, then there exists a state $s$ and
  action $a$ such that
  $Q^*_M(s, a) > \E_{a^*(s)\sim\pi^*(s)} Q^*_M(s, a^*(s))$ where
  $a^*(s)$ is drawn from $\pi^*(s)$ and the expectation is taken over
  choices of $a^*(s)$. This is a contradiction because action $a$ is
  available from state $s$ in $M^\alpha$ by
  Definition~\ref{def:fair-mdp}.
\end{proof} 
\section{Omitted Details of \ecubep}
\label{sec:ecubep-deets}

We first formally define the exploitation MDP $\mknown$ and the
exploration MDP $\munknown$:
\begin{definition} [Definition 9,~\citet{KearnsS02}]
\label{def:exploit-induced-mdp}
Let $M = (\mathcal{S}_M, \mathcal{A}_M, P_M, R_M, T, \gamma)$ be an
MDP with state space $\mathcal{S}_M$ and let
$\known \subset \mathcal{S}_M$.  We define the \emph{exploration ~MDP}
$\mknown = (\mathcal{S}_{\mknown}, \mathcal{A}_M, P_{\mknown},
R_{\mknown}, T, \gamma)$ on $\known$ where
\begin{itemize}
\item $\mathcal{S}_{\mknown} = \known \cup \{s_0\}$.
\item For any state $s\in \known$,
  $\bar{R}_{\mknown}(s) = \bar{R}_{M}(s)$, rewards in $\mknown$ are
  deterministic, and $\bar{R}_{\mknown}(s_0) = 0$.
\item For any action $a$, $P_{\mknown}(s_0, a, s_0) = 1$. Hence, $s_0$
  is an absorbing state.
\item For any states $s_1, s_2 \in \known$ and any action $a$,
  $P_{\mknown}(s_1, a, s_2) = P_{M}(s_1, a, s_2)$, i.e.  transitions
  between states in $\known$ are preserved in $\mknown$.
\item For any state $s_1\in \known$ and any action $a$,
  $P_{\mknown}(s_1, a, s_0) = \Sigma_{s_2\notin \known}P_M(s_1, a,
  s_2)$.
  Therefore, all the transitions between a state in $\known$ and
  states not in $\known$ are directed to $s_0$ in $\mknown$.
\end{itemize}
\end{definition}

\begin{definition}[Implicit, \citet{KearnsS02}]
\label{def:induced-mdp-exp}
Given MDP $M$ and set of known states $\Gamma$, the
\emph{exploration~MDP} $\munknown$ on $\known$ is identical to the
exploitation~MDP $\mknown$ except for its reward
function. Specifically, rewards in $\munknown$ are deterministic as in
$\mknown$, but for any state $s\in \known$,
$\bar{R}_{\munknown}(s) = 0$, and $\bar{R}_{\munknown}(s_0) = \rmax$.
\end{definition}

We next define the approximation MDPs $\hmknown$ and
$\hmunknown$ which are defined over the same set of states and actions as 
in $\mknown$ and $\munknown$, respectively. 

Let $M$ be an MDP and $\known$ the set of known states of $M$. 
For any $s, s'\in \known$ and action $a\in A$, 
let $\hat P_{\mknown}(s,a,s')$ denote the empirical probability transition
estimates obtained from the visits to $s$. 
Moreover, for any state $s\in\known$ let $\bar{\hat {R}}_{\mknown}(s)$ denote the empirical
estimates of the average reward obtained from visits to s.
Then $\hmknown$ is identical to $\mknown$ except that: 
\begin{itemize}
\item in any known state $s\in\known$, 
$\hat{R}_{\hmknown}(s)=\bar{\hat {R}}_{\mknown}(s)$.
\item for any $s, s'\in \known$ and action $a\in A$, $P_{\hmknown}(s, a, s') = \hat{P}_{\mknown}(s, a, s')$.
\end{itemize}
Also $\hmunknown$ is identical to $\munknown$ except that: 
\begin{itemize}
\item for any $s, s'\in \known$ and action $a\in A$, $P_{\hmunknown}(s, a, s') = \hat{P}_{\munknown}(s, a, s')$.
\end{itemize}

\end{document}